\newtheorem{assumption}{Assumption}
\algnewcommand{\parState}[1]{\State%
    \parbox[t]{\dimexpr\linewidth-\algmargin}{\strut #1\strut}}
\newcommand{\twocolour}{red}
\newcommand{\threecolour}{blue}
\DeclareMathOperator*{\p}{\mathbf{P}}
\DeclareMathOperator*{\E}{\mathbf{E}}
\DeclareMathOperator*{\e}{\mathrm{e}}
\DeclareMathOperator*{\argmin}{arg\,min}
\DeclareMathOperator*{\argmax}{arg\,max}
\renewcommand{\d}[1]{\ensuremath{\operatorname{d}{#1}}}
\pgfplotsset{compat=1.13}
\begin{document}

\title{ChronosPerseus: Randomized Point-based Value Iteration with Importance Sampling for POSMDPs}

\author{\name Richard Kohar \email richard@math.kohar.ca \\
\AND 
\name Fran\c{c}ois Rivest \email francois.rivest@\{mail.mcgill.ca, rmc.ca\} \\
\AND
\name Alain Gosselin \email alain.gosselin@rmc.ca \\
\addr Department of Mathematics and Computer Science\\
Royal Military College of Canada\\
P.O. Box 17000 Station Forces\\
Kingston, Ontario, Canada\\
K7K 7B4}

\editor{}

\maketitle

\begin{abstract}%   <- trailing '%' for backward compatibility of .sty file
In reinforcement learning, agents have successfully used environments modeled with Markov decision processes (MDPs). However, in many problem domains, an agent may suffer from noisy observations or random times until its subsequent decision.  While partially observable Markov decision processes (POMDPs) have dealt with noisy observations, they have yet to deal with the unknown time aspect. Of course, one could discretize the time, but this leads to Bellman's Curse of Dimensionality. To incorporate continuous sojourn-time distributions in the agent's decision making, we propose that partially observable semi-Markov decision processes (POSMDPs) can be helpful in this regard. We extend \citet{Spaan2005a} randomized point-based value iteration (PBVI) \textsc{Perseus} algorithm used for POMDP to POSMDP by incorporating continuous sojourn time distributions and using importance sampling to reduce the solver complexity. We call this new PBVI algorithm with importance sampling for POSMDPs---\textsc{ChronosPerseus}. This further allows for compressed complex POMDPs requiring temporal state information by moving this information into state sojourn time of a POMSDP. The second insight is that keeping a set of sampled times and weighting it by its likelihood can be used in a single backup; this helps further reduce the algorithm complexity. The solver also works on episodic and non-episodic problems. We conclude our paper with two examples, an episodic bus problem and a non-episodic maintenance problem.
\end{abstract}

\begin{keywords}
 partially observable semi-Markov decision process (POSMDP), point-based value iteration (PBVI), Perseus, timing, importance sampling
\end{keywords}

\section{Introduction}

Time is critical to cognitive agents making decisions \citep{ManiadakisTrahanias2011}. Planning not only requires \textit{what} to do but also \textit{when} to do it \citep{RivestKohar2020}. For instance, when should we give up waiting for a bus and start walking? When should we perform some maintenance on a particular piece of machinery? When should an autonomous robotic vacuum cleaner start to return to its base before it runs out of battery life? In each of these mentioned situations, the agent requires a time model to aid in the decision process.

In artificial intelligence, a common model for planning in a stochastic environment is the Markov decision process (MDP) \citep{RussellNorvig2010, Sutton2018}. It is a formal framework of sequential decision making that not only considers which sequence of actions, or a policy, gains the most immediate reward but also future rewards \citep{Bellman1957a}.  Time is considered by discounting the reward, but to consider the issue of \textit{when} to act, we could augment the state space with time steps leading to what Bellman calls the Curse of Dimensionality.  

Two well-known extensions to MDPs are semi-Markov MDP (SMDP) \citep{Howard1963} and partially observable MDP (POMDP) \citep{Aoki1965, Astrom1965}. The former allows for transitions to have sojourn time distributions, while the latter allows for partially observable states (incomplete information). In reinforcement learning, SMDPs are primarily used with options \citep{Sutton1999b, Precup2000a}, where the underlying MDP of the micro-actions defines the transitions of macro-actions. The SMDP, in that case, represents a temporal compression of the underlying MDP. In contrast, POMDPs are computationally expensive \citep{Papadimitriou1987, Madani1999}, as the solvers usually construct a policy over a belief state space that must cover all the combinations of states. 

POMDPs are not widely used in practice due to their computational complexity \citep{Papadimitriou1987, Madani1999, Sutton2018}. However, with the development of approximate solutions methods such as point-based value iteration by \citet{Pineau2003} leading to \textsc{Perseus} by \citet{Spaan2005a}.

Cognitive agents, as well as many scheduling and maintenance problems, could naturally benefit from the ability to solve problems that mix both partial observability and rich temporal transitions. The partially observable semi-Markov decision process (POSMDP) extends POMDP by including stochastic sojourn time between transitions. As we will show, this sojourn time can further be used in the belief state update. \citet{White1976} first proposed the finite-horizon discrete-time POSMDP model, and his algorithm for computing the optimal cost and an optimal policy were based on procedures for POMDPs by \citet{Sondik1971}. Half a decade later, \citeauthor{Wakuta1981} defined the infinite-horizon continuous-time POSMDP model with the average-cost criterion \citep{Wakuta1981} and the discounted-cost criterion \citep{Wakuta1982}. The definition and notation of POSMDP that we present in Section~\ref{sec:POSMDPDefinition} is influenced by \citet{Wakuta1982}, \citet{Hernandez-Lerma1989Book}, and \citet{Yu2006}.

While \citet{Zhang2017} developed a POSMDP solver based on \textsc{Perseus}, we introduce importance sampling to use the collected samples more efficiently. This leads us to the significant contribution of this paper, which is our importance sampling point-based POSMDP solver called \textsc{ChronosPerseus} that we will present in Section~\ref{sec:ChronosPerseus}. This allows us to include temporal properties to POMDPs without the added complexity that would be generated by extending the temporal information through a large expansion of the state space dimensions. 

The remainder of this paper is organized as follows. First, in Section~\ref{sec:POSMDPDefinition} we formally introduce the framework, the notation, and the dynamics for the POSMDP model. Then, we discuss how we handle concepts of time in Section~\ref{sec:POSMDP:time} and partial observability in Section~\ref{sec:POSMDP:PartialObservability}. We discuss the difference between the process's history and the observable history in Section~\ref{sec:POSMDP:history}. In Section~\ref{sec:POSMDP:beliefstate}, we give belief update rules that do incorporate and do not incorporate sojourn time.  We believe that the sojourn time is important information that can be incorporated into the belief update, which in turn affects the agent's optimal decision. Section~\ref{sec:POSMDP:reward} discusses the reward and the assumptions required to ensure the reward function is well-defined. For the value function of a POSMDP in Section~\ref{sec:POSMDP:ValueFunction}, we show that depending on the approach taken, either extending the POMDP model with the addition of continuous time.  Finally, we describe in Section~\ref{sec:ChronosPerseus} the main contribution of this paper---the \textsc{ChronosPerseus} algorithm. In Section~\ref{sec:Applications}, we conclude this paper with some solved POSMDP examples: the bus problem of an agent deciding to stay on the bus or ride its bike, and maintenance of water filters in the real world based on parameters from \citet{Zhang2017}. The first one includes a mixture of random continuous and fixed sojourn times as well as mixed-observability on an episodic task, while the second includes continuous sojourn times and observation space on a non-episodic problem.

The code to the \textsc{ChronosPerseus} algorithm and the examples are available online (\url{https://github.com/rkohar/ChronosPerseus}).

\section{Partially Observable Semi-Markov Decision Processes}\label{sec:POSMDPDefinition}

A partially observable semi-Markov decision process (POSMDP) is an 11-tuple $$\left\langle \mathcal{S}, \mathcal{A}, \mathcal{K}, \mathcal{O}, Q, G, G_0, R, \xi_0, \beta, N \right\rangle$$ where
\begin{itemize}[leftmargin=2.9cm]
	  \item[$\mathcal{S}$] is the Borel \textit{state space}, and the elements of $\mathcal{S}$ are called \textit{states}
	  \item[$\mathcal{A}$] is the Borel \textit{action space}, and the elements of $\mathcal{A}$ are called \textit{actions}. To each $s \in \mathcal{S}$, we associate a nonempty Borel-measurable subset $\mathcal{A}(s) \subseteq \mathcal{A}$, whose elements are the \textit{admissible actions} for the agent in state $s$.%when the process is in state $s$
		\item[$\mathcal{K}$] is the set of admissible state-action pairs, and it is assumed to be a Borel subset $\mathcal{K} \subseteq \mathcal{S} \times \mathcal{A}$. %In other words, $$\mathcal{K} = \{(s,a) \mid s \in \mathcal{S}, a \in \mathcal{A}(s)\}.$$  
    \item[$\mathcal{O}$] is the Borel \textit{observation space}, and the elements of $\mathcal{O}$ are called observations
    \item[$Q(\d \tau, \d s' \mid s, a)$] denotes the sojourn time-state (Borel-measurable) stochastic kernel on $\mathbb{R}_{> 0} \times \mathcal{S} $ given $\mathcal{S} \times \mathcal{A}$
	  \item[$G(\d o \mid a, s')$] is the observation (Borel-measurable) stochastic kernel on $\mathcal{O}$ given $\mathcal{A} \times \mathcal{S}$
		\item[$G_0(\d o \mid s')$] denotes the initial observation (Borel-measurable) stochastic kernel on $\mathcal{O}$ given $\mathcal{S}$
  \item[$R(s,a)$] is the per-stage (bounded Borel-measurable) reward function given $\mathcal{S} \times \mathcal{A}$ ($R(s,a)$ could be further broken down into a leap sum and a continuous reward, see Section~\ref{sec:POSMDP:reward}.)
	\item[$\xi_0$]  is the (\textit{a priori}) initial (belief) state distribution ($\xi_0 \in \p(\mathcal{S})$)
	\item[$\beta$] is the discounting rate where $\beta \in [0,1)$.
	\item[$N$] is the planning horizon. It could be finite, or $N = \infty$.
\end{itemize}

Given the model $\left\langle \mathcal{S}, \mathcal{A}, \mathcal{K}, \mathcal{O}, Q, G, G_0, R, \xi_0, \beta, N \right\rangle$, the dynamics of the POSMDP proceed according to Algorithm~\ref{alg:DynamicsPOSMDP}. This involves at each decision epoch $n$ choosing an action $a_n$, accruing reward $R(s_n, a_n)$ as the state changes from $s_n$ to $s_{n+1}$ in $\tau_n$ amount of time, and partially observing $s_{n+1}$ as $o_{n+1}$. The agent uses all the information available up to decision epoch $n$, namely the observable history $h_n$, to choose action $a_n = \pi_n(h_n)$ using policy $\pi_n$. With the dynamics of the POSMDP specified by Algorithm~\ref{alg:DynamicsPOSMDP} and illustrated in Fig.~\ref{fig:POSMDP:GraphicalModel}, we denote the sequence of policies that the agent uses from decision epoch $0$ to $n-1$ as $\pi = (\pi_0, \pi_1, \ldots, \pi_{n-1})$. The set of all admissible policies is denoted $\Pi$. 

\begin{algorithm}
\caption{Dynamics of POSMDP}\label{alg:DynamicsPOSMDP}
In the beginning $n = 0$, the state $s_0$ is simulated from an initial (belief) state distribution $\xi_0$.

For each decision epoch $n = 1, 2, \ldots, N$:
\begin{enumerate}
 \item Based on the observable history
   \begin{align*}
	   h_0 &= (\xi_0)\\
		 h_n &= (\xi_0, a_1, t_1, o_1, \ldots, a_n, t_n, o_n),
	 \end{align*}
	 the agent performs action
	  $$a_n = \pi_n(h_n) \in \mathcal{A} \qquad n = 1, 2, \ldots, N.$$
	 Here $\pi_n$ denotes a policy that the agent uses at decision epoch $n$.
 \item The agent obtains a reward $R(s_n, a_n)$ for choosing action $a_n$ at decision epoch $n$.
 \item The state evolves randomly with sojourn time-state transition probability
   \begin{equation}
    Q(\tau, s' \mid s, a) = \p(T_{n+1} - T_n \leq \tau, S_{n+1} = s' \mid S_n = s, A_n = a)
    \label{eq:POSMDP:definitionofQ}
    \end{equation}
	to the next state $s_{n+1}$ that decision epoch $n+1$.
 \item The agent records a noisy observation $O_n \in \mathcal{O}$ of the state $S_{n+1}$ according to
   $$G(o \mid a, s') = \p(O_n = o \mid A_n = a, S_{n+1} = s').$$
 \item The agent updates its history as
   $$h_{n+1} = (h_n, a_{n+1}, t_{n+1}, s_{n+1}).$$
	If $n < N$, then set $n$ to $n+1$, and go back to step (a).
	
	If $n = N$, then the agent receives the last reward and the process terminates.
\end{enumerate}
\end{algorithm}

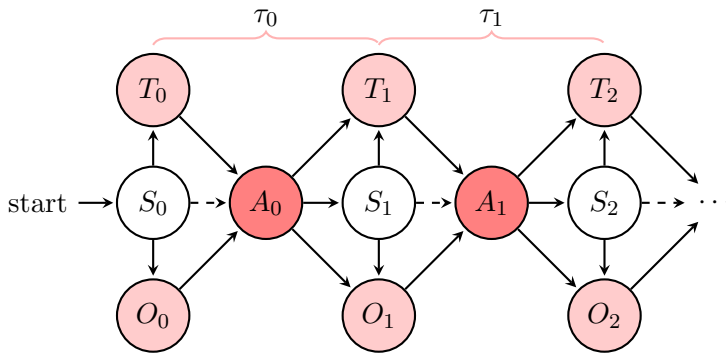
\begin{figure}[h!t]
\centering
\usetikzlibrary{arrows,automata}
\begin{tikzpicture}[->,>=stealth,shorten >=1pt,auto,node distance=1.5cm,
                    thick]

\node[initial,state] (S0)                    {$S_0$};
\node[state, fill=\twocolour!20] (O0) [below of = S0] {$O_0$};
\node[state, fill=\twocolour!20]         (T0) [above of=S0] {$T_0$};
\node[state, fill=\twocolour!50]         (A0) [right of=S0] {$A_0$};
\node[state]         (S1) [right of=A0] {$S_1$};
\node[state, fill=\twocolour!20]         (T1) [above of=S1] {$T_1$};
\node[state, fill=\twocolour!20]         (O1) [below of=S1] {$O_1$};

\node[state, fill=\twocolour!50]         (A1) [right of=S1] {$A_1$};
\node[state]         (S2) [right of=A1] {$S_2$};
\node[state, fill=\twocolour!20]         (T2) [above of=S2] {$T_2$};
\node[state, fill=\twocolour!20]         (O2) [below of=S2] {$O_2$};

\node[] [right of =S2] (A2) {$\cdots$};

\path (S0) edge (T0) edge (O0)
          (T0) edge (A0)
          (O0) edge (A0)
          (S1) edge (T1) edge (O1)
          (A0) edge (T1) edge (O1)
          (A0) edge (S1)
          (A1) edge (T2) edge (O2)
          (A1) edge (S2)
          (S2) edge (T2) edge (O2)
          (T1) edge (A1)
          (O1) edge (A1)
          (T2) edge (A2)
          (O2) edge (A2);

\path[dashed] (S0) edge  (A0)
                        (S1) edge (A1)
			    (S2) edge (A2);
			    
\draw[decorate, decoration={brace, amplitude=5pt}, -, \twocolour!30] (0,2.1) -- (3,2.1) node[midway, yshift=3pt, black]{$\tau_0$} ;
\draw[decorate, decoration={brace, amplitude=5pt}, -, \twocolour!30] (3,2.1) -- (6,2.1) node[midway, yshift=3pt, black]{$\tau_1$};

\end{tikzpicture}
\caption{Graphical model of POSMDP. The states $S_i$ are hidden from the agent, while the observations $O_i$ and times $T_i$ are observable. The sojourn times are $\tau_i$ and the agent's actions are $A_i$. The solid arrows $\rightarrow$ represent the direct influence from one element to another, while the dashed arrows $\dashrightarrow$ represent indirect influence. For example, action $A_0$ is directly influenced by time $T_0$ and observation $O_0$ because this is the information that is available to the agent, while state $S_0$ is indirectly influencing action $A_0$ since the agent has to infer in which state it is in currently from time $T_0$ and observation $O_0$.}
\label{fig:POSMDP:GraphicalModel}
\end{figure}

\subsection{Time}\label{sec:POSMDP:time}
Unlike POMDPs where the sojourn times are constant, the sojourn times are random in a POSMDP; see Fig.~\ref{fig:POSMDP:POMDPvsPOSMDPtime}.
\begin{figure}[h!t]
\centering
\begin{tikzpicture}[>=stealth,thick]
%POMDP
\node[left] at (-2.5,2.2) {POMDP:};
\draw (-2,2.2) -- (2.4,2.2);

\draw (-2,2.1) -- (-2,2.3)node[above]{$t_0$};
\draw (-0.7,2.1) -- (-0.7,2.3)node[above]{$t_1$};
\draw[->, \twocolour] (-1.8,2.7) to[out=30, in=150] (-0.9,2.7) {};

\draw (0.6,2.1) -- (0.6,2.3)node[above]{$t_2$};
\draw[->, \twocolour] (-0.5,2.7) to[out=30, in=150] (0.4,2.7) {};

\draw (1.9,2.1) -- (1.9,2.3)node[above]{$t_3$};
\draw[->, \twocolour] (0.8,2.7) to[out=30, in=150] (1.7,2.7) {};

\draw (3.62,2.1) -- (3.62,2.3)node[above]{$t_n$};
\draw[->, \twocolour] (2.1,2.7) to[out=30, in=150] (3.42,2.7) {};
\draw[white, fill=white]  (2.4,3) rectangle (3.1,2.7);

\draw (4.92,2.1) -- (4.92,2.3)node[above]{$t_{n+1}$};
\draw[->, \twocolour] (3.82,2.7) to[out=30, in=150] (4.52,2.7) {};

\draw [decorate, decoration={brace, amplitude=5pt}] (4.92,2) --node[midway,below,yshift=-1ex, \twocolour]{$\tau = 1$} (3.62,2);
\draw[->] (3.1,2.2) -- (5.62,2.2)node[below right]{time};
\node at (2.8,2.2) {$\cdots$};

%POSMDP
\node[left] at (-2.5,0.5) {POSMDP:};

\draw (-2,0.5) -- (2.4,0.5);

\draw (-2,0.4) -- (-2,0.6)node[above]{$t_0$};
\draw (-1.2,0.4) -- (-1.2,0.6)node[above]{$t_1$};
\draw[->, \twocolour] (-1.8,1) to[out=30, in=150] (-1.4,1) {};

\draw (1.1,0.4) -- (1.1,0.6)node[above]{$t_2$};
\draw[->, \twocolour] (-1,1) to[out=30, in=150] (0.9,1) {};

\draw (1.9,0.4) -- (1.9,0.6)node[above]{$t_3$};
\draw[->, \twocolour] (1.3,1) to[out=30, in=150] (1.7,1) {};

\draw (3.62,0.4) -- (3.62,0.6)node[above]{$t_n$};
\draw[->, \twocolour] (2.1,1) to[out=30, in=150] (3.42,1) {};
\draw[white, fill=white]  (2.4,1.3) rectangle (3.1,1);

\draw (5.32,0.4) -- (5.32,0.6)node[above]{$t_{n+1}$};
\draw[->, \twocolour] (3.82,1) to[out=30, in=150] (4.92,1) {};

\draw [decorate, decoration={brace, amplitude=5pt}] (5.32,0.3) --node[midway,below,yshift=-1ex, \twocolour]{$\tau \sim F(\tau \mid s, a, s')$} (3.62,0.3);
\draw[->] (3.1,0.5) -- (5.62,0.5)node[below right]{time};
\node at (2.8,0.5) {$\cdots$};

\end{tikzpicture}
\caption{The difference between POMDPs and POSMDPs for sojourn times. The sojourn time for a POMDP is constant whereas the sojourn time for a POSMDP is a random amount of time that follows some probability distribution.}
\label{fig:POSMDP:POMDPvsPOSMDPtime}
\end{figure}
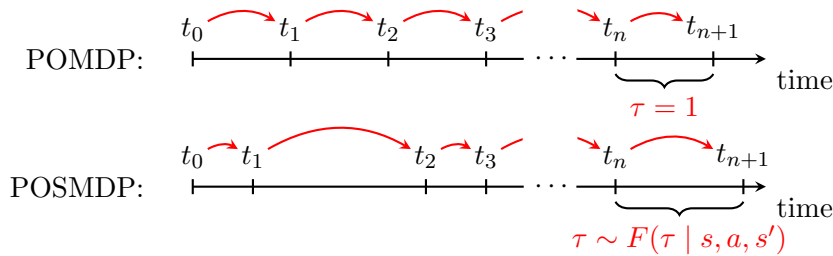

The time of the $n^\text{th}$ decision epoch is denoted by the random variable
\begin{equation}
T_n = \begin{cases}
               0                    & n = 0\\
               T_{n-1} + \tau_{n-1} & n = 1, 2, 3, \dots,\\
      \end{cases}
\end{equation}
in which the observed time of $T_n$ is denoted by $t_n$. The sojourn time from state $s$ to state $s'$ is a nonnegative real-valued random variable $\tau_n = T_{n+1} - T_n$ distributed by $F(\tau \mid s, a, s')$ conditional on $(s,a,s')$ such that 
\begin{equation} \label{eq:ConditionalSojournTime}
    Q(\tau, s' \mid s, a) = P(s' \mid s, a) F(\tau \mid s, a, s').
\end{equation}
From \eqref{eq:ConditionalSojournTime}, we assume implicitly that a sojourn time $\tau_n$ does not affect the next sojourn time $\tau_{n+1}$.

\subsection{Partial observability}\label{sec:POSMDP:PartialObservability}
For partial observability, we follow the same structure from POMDPs. We do not change the notion of partial observability with the introduction of random sojourn time between decision epochs. In our definition of the POSMDP in Section~\ref{sec:POSMDPDefinition}, the observation is independent of the previous state. If the observation depends on more information, such as $(s,a,\tau,s')$, then we could define $G(o \mid s, a, \tau, s')$ as
\begin{equation}
G(o \mid a, s') = \sum_{s \in \mathcal{S}} Q(\tau, s' \mid s, a) G(o \mid s, a, \tau, s').
\label{eq:POSMDP:Q-PF}
\end{equation}

\subsection{History}\label{sec:POSMDP:history}
A realization of this process is
$$(s_0, o_0, a_0, \tau_0, s_1, o_1, a_1, \tau_1, s_2, o_2, a_2, \tau_2, \ldots) \in (\mathcal{S} \times \mathcal{O} \times \mathcal{A} \times \mathbb{R}_{>0})^\infty \triangleq \Omega.$$
However, we are not able to rely on the unobservable states $s_0$, $s_1$, $s_2$, $\ldots,$ so we introduce the concept of observable histories. We define the space of \textit{observable histories} until the $n^\text{th}$ decision epoch by 
\begin{equation}
\mathcal{H}_n = \begin{cases}
       \p(\mathcal{S}) \times \mathcal{O}, & n = 0;\\
       \mathcal{H}_{n-1} \times \mathcal{A} \times \mathbb{R}_{>0} \times \mathcal{O}, & n = 1, 2, 3, \ldots.
       \end{cases}
\end{equation}
Thus, the element $h_0 = (\xi_0, o_0) \in \mathcal{H}_0$ is an initial observed history, and an element $h_n \in \mathcal{H}_n$ is called an \textit{observed history} up to $n$, and it is denoted by
\begin{align}
h_n &= (\underbrace{\xi_0, o_0, a_0, \tau_0, o_1, a_1, \tau_1, \ldots, o_{n-1}, \tau_{n-1}}_{h_{n-1}}, a_{n-1}, \tau_n, o_n)\\
    &= (h_{n-1}, a_{n-1}, \tau_n, o_n).\label{eq:history}
\end{align}

\subsection{Belief State Representation}\label{sec:POSMDP:beliefstate}
The agent can only partially observe the state $s \in \mathcal{S}$ by inferring from its past and current observations.  Thus, a memory of past observations is required for an agent to optimally choose its actions in a partially observable environment.  

A na\"{i}ve approach would allow the agent to remember its sequence of sojourn times, observations, and actions.  However, this sequence can grow unbounded over time, which is not practical with finite memory.  Instead, we can summarize this information with sufficient statistics \citep{Aoki1965, Astrom1965}. A statistic is considered \textit{sufficient} when no other statistic calculated from the same sample provides any additional information to the estimated parameter value \citep{Fisher1922}.  With these sufficient statistics that summarize the observable history, we can construct a probability distribution of where the agent is in the state space.

\begin{definition}[Belief state] 
A belief state $\xi$ is a probability distribution over the state space $\mathcal{S}$. 
\end{definition}

If the state space $\mathcal{S} = \{s_1, s_2, \ldots, s_{|\mathcal{S}|}\}$ is a finite set, then the belief state $\xi$ is defined by
   \begin{equation*}
     \xi = \begin{bmatrix} \xi(s_1) \\ \xi(s_2) \\ \vdots \\ \xi(s_{|\mathcal{S}|})\end{bmatrix} \triangleq \begin{bmatrix} \p(S = s_1) \\ \p(S = s_2) \\ \vdots \\ \p(S = s_{|\mathcal{S}|})\end{bmatrix}.
   \end{equation*}
To ensure that $\xi$ is a probability distribution,
\begin{equation*}
\xi(s_i) \geq 0 \qquad \forall s_i \in \mathcal{S}, \qquad\qquad \text{and} \qquad\qquad \sum_{i = 1}^{|\mathcal{S}|} \xi(s_i) = 1.
\end{equation*}

This probability distribution encodes the agent's subjective probability of its location in the environment's state space from its history. 
\begin{equation}
\xi_n(s_i) = \p(S_n = s_i \mid \xi_0, o_0, a_0, o_1, a_1, \ldots, o_{n-1}, a_{n-1}, o_n).
\end{equation}
The belief state is a sufficient statistic of the observable history. In other words, the belief state carries all the information the agent requires to make its decision from the present state; hence, the process retains the Markov property.
\begin{proposition}
A belief state $\xi$ is a sufficient statistic of the agent's history, and given an action $a$, sojourn time $\tau$, and observation $o$, the next belief state is
\begin{equation*}
\xi_n(s_i) = \p(S_n = s_i \mid \xi_{n-1}, a_{n-1}, \tau_{n}, o_n)
\end{equation*}
\end{proposition}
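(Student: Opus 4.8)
The plan is to prove both halves of the statement at once by deriving the belief recursion directly: once we have a deterministic update map sending $(\xi_{n-1}, a_{n-1}, \tau_n, o_n)$ to $\xi_n$, the sufficiency claim follows by unrolling the recursion back to $\xi_0$. I would argue by induction on the decision epoch $n$. The base case packages $\xi_0$ with the initial observation $o_0$ through the kernel $G_0$; for the inductive step I assume $\p(S_{n-1} = s \mid h_{n-1}) = \xi_{n-1}(s)$ for all $s \in \mathcal{S}$ and every observable history $h_{n-1}$, and show the analogous identity holds at epoch $n$ with the stated formula.

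For the inductive step, begin from the definition $\xi_n(s_i) = \p(S_n = s_i \mid h_n)$ and write $h_n = (h_{n-1}, a_{n-1}, \tau_n, o_n)$ as in \eqref{eq:history}; since $a_{n-1} = \pi_{n-1}(h_{n-1})$ is a deterministic function of $h_{n-1}$, conditioning on it is harmless. Applying Bayes' rule to peel off the freshest data $(\tau_n, o_n)$ gives
\[
\xi_n(s_i) = \frac{\p(o_n \mid S_n = s_i,\, a_{n-1},\, \tau_n,\, h_{n-1})\;\p(\tau_n,\, S_n = s_i \mid a_{n-1},\, h_{n-1})}{\p(\tau_n,\, o_n \mid a_{n-1},\, h_{n-1})}.
\]
Now I would invoke two conditional-independence facts read off the graphical model of Fig.~\ref{fig:POSMDP:GraphicalModel}: first, $O_n$ is influenced by the past only through $(A_{n-1}, S_n)$, so the first numerator factor collapses to $G(o_n \mid a_{n-1}, s_i)$; second, $(\tau_n, S_n)$ is influenced by the past only through $(S_{n-1}, A_{n-1})$, so marginalizing over $S_{n-1}$ and applying the inductive hypothesis $\p(S_{n-1} = s \mid h_{n-1}) = \xi_{n-1}(s)$ yields
\[
\p(\tau_n,\, S_n = s_i \mid a_{n-1},\, h_{n-1}) = \sum_{s \in \mathcal{S}} Q(\tau_n, s_i \mid s, a_{n-1})\,\xi_{n-1}(s),
\]
with $Q(\tau, s' \mid s, a) = P(s' \mid s, a)\,F(\tau \mid s, a, s')$ as in \eqref{eq:ConditionalSojournTime}. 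The denominator is then precisely the sum of the numerator over all $s_i \in \mathcal{S}$, i.e.\ a normalizing constant independent of $s_i$.

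Combining these gives
\[
\xi_n(s_i) = \eta\; G(o_n \mid a_{n-1}, s_i) \sum_{s \in \mathcal{S}} Q(\tau_n, s_i \mid s, a_{n-1})\,\xi_{n-1}(s),
\]
where $\eta^{-1} = \sum_{s_j \in \mathcal{S}} G(o_n \mid a_{n-1}, s_j) \sum_{s \in \mathcal{S}} Q(\tau_n, s_j \mid s, a_{n-1})\,\xi_{n-1}(s)$. The right-hand side depends on $h_{n-1}$ only through $\xi_{n-1}$, which simultaneously (a) gives the claimed recursion $\xi_n(s_i) = \p(S_n = s_i \mid \xi_{n-1}, a_{n-1}, \tau_n, o_n)$ and (b) shows $\xi_n$ is a function of $(\xi_{n-1}, a_{n-1}, \tau_n, o_n)$ alone; iterating (b) back to $\xi_0$ shows $\xi_n$ carries all the information in $h_n$ relevant to predicting $S_n$, which is the sufficiency claim, and the Markov property of the belief process is an immediate consequence (the same computation with $Q$ replaced by its $\tau$-marginal $P(s'\mid s,a)$ yields the sojourn-time-agnostic update rule mentioned in Section~\ref{sec:POSMDP:beliefstate}). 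I expect the main obstacle to be the clean justification of the two conditional-independence reductions: for finite $\mathcal{S}$ they are a routine d-separation reading of Fig.~\ref{fig:POSMDP:GraphicalModel}, but stating them rigorously for the Borel state/observation spaces of Section~\ref{sec:POSMDPDefinition} requires phrasing the whole argument in terms of the kernels $Q$, $G$, $G_0$ and their disintegrations rather than elementary conditional probabilities; the Bayes/normalization bookkeeping is otherwise mechanical.
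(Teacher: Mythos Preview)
Your argument is correct and, in fact, more complete than what the paper offers. The paper states this Proposition without proof; it then immediately proves the subsequent Theorem giving the explicit update formula
\[
\xi(s' \mid a, \tau, o) = \frac{G(o \mid a, s') \sum_{s \in \mathcal{S}} Q(\tau, s' \mid s, a)\,\xi(s)}{\sum_{s'' \in \mathcal{S}} G(o \mid a, s'') \sum_{s \in \mathcal{S}} Q(\tau, s'' \mid s, a)\,\xi(s)}
\]
by a direct Bayes/multiplication-rule computation starting from $\p(s' \mid a, \tau, o)$. That computation is essentially the same algebra you carry out in your inductive step, but the paper does not wrap it in an induction on $n$, does not condition explicitly on the full observable history $h_{n-1}$, and does not spell out why the dependence on $h_{n-1}$ collapses to dependence on $\xi_{n-1}$ alone. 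Your inductive framing is exactly what is needed to justify the sufficiency half of the statement, which the paper leaves implicit. The conditional-independence reductions you flag as the delicate point are handled in the paper's Theorem proof simply by asserting $\p(o \mid s, a, \tau, s') = G(o \mid a, s')$ and $\p(\tau, s' \mid s, a) = Q(\tau, s' \mid s, a)$ ``by definition,'' so your d-separation reading of Fig.~\ref{fig:POSMDP:GraphicalModel} is at least as careful as the paper's own treatment.
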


We can update the belief state in two ways: with or without observing sojourn time $\tau$.

\subsubsection{The Belief Update without Sojourn Time}

The belief update for POMDP is usually given by
\begin{equation}
\xi(s' \mid a, o) = \frac{G(o \mid a, s') \displaystyle\sum_{s \in \mathcal{S}} \xi(s) P(s' \mid s, a) }{P(o \mid \xi, a)} \qquad \forall s' \in \mathcal{S}, \label{eq:POMDP:discreteStatediscreteAction:beliefState}
\end{equation}
where the denominator
\begin{equation}
P(o \mid \xi, a) = \sum_{s' \in \mathcal{S}} G(o \mid a, s') \displaystyle\sum_{s \in \mathcal{S}} \xi(s) P(s' \mid s, a) \label{eq:POMDP:discreteStatediscreteAction:beliefStateDenominator}
\end{equation}
is a normalization factor \citep{Spaan2012}. 

By replacing the state transition probability distribution $P$ with the sojourn-time state transition probability distribution $Q$, the belief update for POSMDP, if the transition time $\tau$ is not observed, would be given by
\begin{align}
\xi(s' \mid a, o) &= \frac{G(o \mid a, s')  \displaystyle\sum_{s \in \mathcal{S}} \xi(s) \int_0^\infty Q(\d \tau, s' \mid s, a)}{P(o \mid \xi, a)}\\
                  &= \frac{G(o \mid a, s') \displaystyle\sum_{s \in \mathcal{S}} \xi(s) P(s' \mid s, a) \int_0^\infty  F(\mathrm{d} \tau \mid s, a, s')}{P(o \mid \xi, a)}\\
                   &= \frac{G(o \mid a, s') \displaystyle\sum_{s \in \mathcal{S}} \xi(s) P(s' \mid s, a) }{P(o \mid \xi, a)}
\end{align}
since the integral of $F$ is one. Therefore, the belief update (including its denominator, the normalization factor) would remain unchanged in this case. However, it would make sense for the agent to use this information since different transitions may have distinguishable sojourn time distributions, indicative of the current state.

\subsubsection{The Belief Update with Sojourn Time}

\begin{theorem}
Given an action $a$ the agent performed, a sojourn time $\tau$, and an observation $o$ seen, the belief update for a particular state $s'$ is
\begin{equation*}
    \xi(s' \mid a, \tau, o) = \frac{G(o \mid a, s') \displaystyle\sum_{s \in \mathcal{S}} Q(\tau, s' \mid s, a) \xi(s)}{\displaystyle\sum_{s''\in \mathcal{S}}G(o \mid a, s'') \sum_{s \in \mathcal{S}} Q(\tau, s'' \mid s, a) \xi(s)}.
\end{equation*}
\end{theorem}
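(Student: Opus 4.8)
The plan is to derive the formula directly from Bayes' rule applied to a single transition of the POSMDP, exploiting the conditional-independence structure of Algorithm~\ref{alg:DynamicsPOSMDP} (equivalently, Fig.~\ref{fig:POSMDP:GraphicalModel}). Let $\xi$ be the current belief over the state just before the transition, let $S'$ denote the next state, and let $(\tau, o)$ be the sojourn time and observation produced by taking action $a$. The target quantity is
\begin{equation*}
\xi(s' \mid a, \tau, o) \;=\; \p(S' = s' \mid \xi,\, a,\, \tau,\, o),
\end{equation*}
and by the definition of conditional probability this equals the joint density of $(S' = s',\, \tau,\, o)$ divided by the marginal density of $(\tau,\, o)$, both conditioned on $(\xi, a)$ and taken with respect to the product of counting measure on $\mathcal{S}$, Lebesgue measure on $\mathbb{R}_{>0}$, and counting measure on $\mathcal{O}$ --- the continuity of $\tau$ being the one point that needs care, to which I return at the end.

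First I would expand the joint density by the chain rule along the causal order $S \to (\tau, S') \to O$ dictated by the dynamics: the hidden pre-transition state is $S = s$ with probability $\xi(s)$; given $(S, A) = (s, a)$, the pair (sojourn time, next state) is governed by the kernel $Q(\d\tau, s' \mid s, a)$ of \eqref{eq:POSMDP:definitionofQ}; and given $(A, S') = (a, s')$, the observation is governed by $G(o \mid a, s')$. The structural fact that makes the sum collapse is that, in the model of Section~\ref{sec:POSMDPDefinition}, $G(o \mid a, s')$ depends only on $(a, s')$, so $O$ is conditionally independent of both $S$ and $\tau$ given $(A, S')$ --- precisely what fails under the more general observation model \eqref{eq:POSMDP:Q-PF}. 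Summing the product over the unobserved $s \in \mathcal{S}$ then gives
\begin{equation*}
\p(S' = s',\, \tau,\, o \mid \xi,\, a) \;=\; G(o \mid a, s') \sum_{s \in \mathcal{S}} Q(\tau, s' \mid s, a)\, \xi(s),
\end{equation*}
where $Q(\tau, s' \mid s, a)$ now denotes the $\tau$-density of the kernel, equivalently $P(s' \mid s, a)$ times the sojourn-time density of \eqref{eq:ConditionalSojournTime}.

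Next I would obtain the normalization constant by marginalizing the numerator over the next state, i.e.\ summing the last display over $s' = s'' \in \mathcal{S}$, which produces exactly $\sum_{s'' \in \mathcal{S}} G(o \mid a, s'') \sum_{s \in \mathcal{S}} Q(\tau, s'' \mid s, a)\, \xi(s)$. Dividing yields the claimed expression; the common infinitesimal $\d\tau$ factor (more precisely, the Radon--Nikodym density with respect to Lebesgue measure on $\mathbb{R}_{>0}$) cancels between numerator and denominator, and the result is nonnegative and sums to one over $s' \in \mathcal{S}$, hence is a legitimate belief state. This is the exact analogue of the POMDP update \eqref{eq:POMDP:discreteStatediscreteAction:beliefState} with the state-transition law $P$ replaced by the kernel $Q$; sufficiency of the updated belief --- that it again summarizes the whole observable history --- then follows by induction, as asserted in the Proposition above.

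I expect the only real obstacle to be the measure-theoretic bookkeeping around the continuous $\tau$: conditioning on the probability-zero event that the sojourn time equals a prescribed value $\tau$ must be justified either by working throughout with densities --- legitimate here since the paper deals with continuous sojourn-time distributions, so $Q(\cdot, s' \mid s, a) \ll \mathrm{Leb}$ --- or by the standard limiting argument over shrinking windows $[\tau, \tau + \Delta)$ with $\Delta \downarrow 0$, the point in both cases being that the identical $\tau$-factor appears in numerator and denominator and drops out. Everything else is the routine Bayes-rule manipulation already familiar from the POMDP case.
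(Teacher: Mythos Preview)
Your proposal is correct and follows essentially the same route as the paper: apply Bayes' rule, factor the joint law along the causal chain $S \to (\tau,S') \to O$ using the multiplication rule, invoke the conditional independence $G(o \mid a, s') = \p(o \mid s, a, \tau, s')$, sum out the hidden $s$, and normalize over $s''$. Your version is slightly more careful than the paper's in flagging the measure-theoretic point about conditioning on the zero-probability event $\{\tau\}$ and cancelling the common density factor, which the paper leaves implicit.
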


\begin{proof}
By definition, 
\begin{equation*}
    \xi(s' \mid a, \tau, o) \triangleq \p(s' \mid a, \tau, o).
\end{equation*}
Applying the conditional rule 
\begin{equation*}
\p(A \mid B) = \frac{\p(A, B)}{\p(B)},
\end{equation*}
we have
\begin{equation}
    \xi(s' \mid a, \tau, o) = \p(s' \mid a, \tau, o) = \frac{\p(a, \tau, o, s')}{\p(a, \tau, o)} = \frac{\p(a, \tau, o, s')}{\displaystyle\sum_{s'' \in \mathcal{S}} \p(a, \tau, o, s'')}.
    \label{eq:POSMDP:beliefupdate:proof}
\end{equation}

Using the multiplication rule $\p(A, B) = \p(A \mid B) \p (B)$, we have
\begin{equation*}
\p(s, a, \tau, o, s') = \p(o \mid s, a, \tau, s') \p(s,a,\tau, s')
\end{equation*}
and applying the multiplication rule again to the second term,
\begin{equation*}
 \p(s, a, \tau, o, s') = \underbrace{\p(o \mid s, a, \tau, s')}_{G(o \mid a, s')} \underbrace{\p(\tau, s' \mid s, a)}_{Q(\tau, s' \mid s, a)} \underbrace{\p(s, a)}_{\xi(s)}.
\end{equation*}
Since the observation $o$ only depends on action $a$ and the next state $s'$,
\begin{equation*}
  \p(o \mid s, a, \tau, s') = G(o \mid a, s'),
\end{equation*}
and by definition $\p(\tau, s' \mid s, a) = Q(\tau, s' \mid s, a)$ (see Eq.~\eqref{eq:POSMDP:definitionofQ}) and $P(s, a) = P(a \mid s) P(s) = \xi(s)$ since $P(a \mid s) = 1$ for the selected action $a$ (the policy is deterministic) and $P(s) = \xi(s)$ (our Bayesian estimate). Thus, 
\begin{equation*}
\p(s, a, \tau, o, s') = G(o \mid a, s') Q(\tau, s' \mid s, a) \xi(s).
\end{equation*}
and
\begin{equation}
\p(a, \tau, o, s') = \sum_{s \in \mathcal{S}} \p(s, a, \tau, o, s') = G(o \mid a, s') \sum_{s \in \mathcal{S}} Q(\tau, s' \mid s, a) \xi(s).
\label{eq:POSMDP:beliefupdate:proof:P(xi,a,tau,o,s')}
\end{equation}
Substituting Eq.~\eqref{eq:POSMDP:beliefupdate:proof:P(xi,a,tau,o,s')} into Eq.~\eqref{eq:POSMDP:beliefupdate:proof} yields the required result.
\end{proof}

We can further decomposed the belief update function when $Q(\tau, s' \mid s, a)$ is given as $P(s' \mid s, a)$ and $F(\tau \mid s, a, s')$ by

\begin{align}
\xi(s' \mid a, \tau, o) &= \frac{G(o \mid a, s') \displaystyle\sum_{s \in \mathcal{S}} Q(\tau, s' \mid s, a) \, \xi(s)}{P(o \mid \xi, a, \tau)} \qquad \forall s' \in \mathcal{S}\label{eq:POSMDP:discreteStatediscreteAction:beliefState:POMDP2SMDP:Q}\\
 &= \frac{G(o \mid a, s') \displaystyle\sum_{s \in \mathcal{S}} P(s' \mid s, a) F(\tau \mid s, a, s') \, \xi(s)}{P(o \mid \xi, a, \tau)} \qquad \forall s' \in \mathcal{S},\label{eq:POSMDP:discreteStatediscreteAction:beliefState:POMDP2SMDP}
\end{align}
where the denominator
\begin{align}
P(o \mid \xi, a, \tau) &= \sum_{s' \in \mathcal{S}} G(o \mid a, s') \displaystyle\sum_{s \in \mathcal{S}} Q(\tau, s' \mid s, a)\, \xi(s)\\
 &= \sum_{s' \in \mathcal{S}} G(o \mid a, s') \sum_{s \in \mathcal{S}} P(s' \mid s, a) F(\tau \mid s, a, s')\, \xi(s) \label{eq:POSMDP:discreteStatediscreteAction:beliefState:POMDP2SMDP:denominator}
\end{align}
is a normalization factor similar to \citet{Wakuta1982}. 

Note that this is similar to the belief update for POMDP; for instance, Eq.~\eqref{eq:POSMDP:discreteStatediscreteAction:beliefState:POMDP2SMDP} is the same as Eq.~\eqref{eq:POMDP:discreteStatediscreteAction:beliefState}, with the addition of the sojourn-time cumulative probability distribution $F(t \mid s, a, s')$ in the numerator and the denominator. This is important as Eq.~\eqref{eq:POSMDP:discreteStatediscreteAction:beliefState:POMDP2SMDP} allows the agent to use the observed transition time to better estimate its real underlying state. Our solver is using this more complete approach.

\subsection{Reward}\label{sec:POSMDP:reward}

\subsubsection{Discount reward criterion}
Let $R(s,a)$ denote the expected total discounted reward between two decision epochs, given that the agent is in state $s$ and it chooses to perform action $a$, which can be expressed by
\begin{equation}
 R(s,a) = \underbrace{r_1(s,a)}_{\mathclap{\substack{\text{\textcolor{\twocolour}{(a)}} \\ \text{\textcolor{\twocolour}{immediate}} \\ \text{\textcolor{\twocolour}{reward}}}}} + \underbrace{\int\limits_{\mathclap{s' \in \mathcal{S}}} \int_0^\infty \left(\int_0^\tau {\e}^{-\beta t} r_2(t \mid s,a,s') \d t \right) Q(\d \tau, \d s' \mid s, a).}_{\mathclap{\substack{\text{\textcolor{\twocolour}{(b)}} \\ \text{\textcolor{\twocolour}{expected discounted reward reward from $s$ to $s'$ over $\tau$ amount of time}}}}} 
 \label{eq:SMDP:RewardFunction:general}
\end{equation} Note that the discount rate $\beta$ is given as part of the model. The reward function can be thought of two parts:
\begin{enumerate}
  \item the lump sum reward $r_1(s,a)$ that is received immediately upon the agent performing action $a$ in state $s$; and
	\item the continuous reward rate $r_2(\tau \mid s,a,s')$ that is received by the agent continuously over the sojourn time $\tau$ from state $s$ to $s'$ under action $a$.
\end{enumerate}
Since the agent cannot observe the state $s \in \mathcal{S}$ directly, we can rewrite Eq.~\eqref{eq:SMDP:RewardFunction:general} for a given belief $\xi \in \triangle$ and action $a \in \mathcal{A}$ as
\begin{equation}
R(\xi, a) = \int\limits_{\mathclap{s \in \mathcal{S}}} \xi(\d s) R(s,a),
\end{equation}
or if it is a discrete set of states, it can be written as
\begin{equation}
R(\xi, a) = \sum_{s \in \mathcal{S}} \xi(s) R(s,a).
\end{equation}

We need a few assumptions to ensure that the expected rewards are well-defined. Time is the safeguard that prevents everything from happening at once. So, we borrow the following assumption for SMDPs by \citet{Ross1970} that allows only a finite number of decision epochs during a finite sojourn time.
\begin{assumption}\label{assumption:POSMDP:finiteNumberDecisionEpoch}
There exists a sojourn time $\tau > 0$ and $\epsilon > 0$ such that
\begin{equation*}
\int\limits_{\mathclap{s' \in \mathcal{S}}} Q(\tau, \d s' \mid s, a) \leq 1 - \epsilon \qquad \forall (s,a) \in \mathcal{K}.
\end{equation*}
\end{assumption}
In other words, Assumption~\ref{assumption:POSMDP:finiteNumberDecisionEpoch} says that for every (admissible) state-action pair $(s,a) \in \mathcal{K}$, there is a positive probability of at least $\epsilon$ that the transition time to state $s'$ will be greater than $\tau$.

We also impose the condition that the per-stage reward $R(s,a)$ is bounded for every state-action pair $(s, a) \in \mathcal{K}$.
\begin{assumption}\label{POSMDP:assumption:rewardBoundM}
There exists a constant $M$ such that for $\beta = 0$, 
\begin{equation*}
 \sup_{(s, a) \in \mathcal{K}} |R(s,a)| < M.
\end{equation*}
\end{assumption}
This assumption is satisfied if $r_1$ and $r_2$ in Eq.~\eqref{eq:SMDP:RewardFunction:general} are bounded, and $$\E(\tau_n \mid S_n = s, A_n = a) < \infty \qquad \forall (s, a) \in \mathcal{K}.$$   In other words, we would like each expected sojourn time $\E(\tau_n)$ to be bounded. Therefore, we make the following assumption.
\begin{assumption}
\begin{equation*}
\sup_{(s, a) \in \mathcal{K}} \E(\tau_n \mid S_n = s, A_n = a ) < \infty.
\end{equation*}
\end{assumption}

\section{ChronosPerseus}\label{sec:ChronosPerseus}
In this paper, we present \textsc{ChronosPerseus}, an approximate point-based value iteration algorithm with importance sampling developed to solve POSMDPs.  It follows other point-based value iteration methods in that it implements a belief state collection by simulating the environment \citep{Pineau2003, Pineau2006, Spaan2005a, Shani2013}, but additionally, it also collects sample sojourn times that are part of the belief update. Once the collection is complete, an approximate value function is constructed for which a policy is found. First, we will describe below the exact value function of a POSMDP followed by its construction with linear vectors (known as $\alpha$-vectors in the POMDP literature). Then, we will construct an approximate value function by sampling beliefs and sojourn times, which will lead to our summary of the \textsc{ChronosPerseus} algorithm.

\subsection{The Value Function of POSMDP}\label{sec:POSMDP:ValueFunction}

The optimal value function---otherwise known as the \citet{Bellman1957} equation---for a finite state, action, and observation POMDP is 
\begin{equation}
V^*(\xi) = \max_{a \in \mathcal{A}(s)}\left[\sum_{s \in \mathcal{S}} \xi(s) R(s, a) + \sum_{o \in \mathcal{O}} \sum_{s' \in \mathcal{S}}  G(o \mid a, s')
         \sum_{s \in \mathcal{S}} \xi(s) P(s' \mid s, a) \gamma V^\pi\big(\xi(\cdot \mid a,o)\big)\right]
\label{eq:POMDP:discreteStatediscreteAction:valueFunctionOptimal}
\end{equation}
for all belief states $\xi$ in the belief simplex $\triangle$.
When Eq.~\eqref{eq:POMDP:discreteStatediscreteAction:valueFunctionOptimal} holds for every belief state in the belief simplex, we are ensured the solution is optimal. If we were to compute the value function over the continuous belief space, belief by belief, it may seem intractable. However, \citet{Sondik1971} shows that in this case, the POMDP optimal value function is convex, and that it can be parameterized by a finite set of hyperplanes.

In order to adapt Eq.~\eqref{eq:POMDP:discreteStatediscreteAction:valueFunctionOptimal} for a POSMDP, the discount factor must take the sojourn time probability $F(\d \tau \mid s, a, s')$ into account. It thus becomes  
\begin{equation}
\gamma(s, a, s') = \int_0^\infty {\e}^{-\beta \tau} F(\d \tau \mid s, a, s').
\end{equation}
Then, the Bellman equation for POSMDP is
\begin{equation}{\scriptsize
V^*(\xi) = \max_{a \in \mathcal{A}(\xi)}\Big[\underbrace{\sum_{s \in \mathcal{S}} \xi(s) R(s, a)}_\text{\textcolor{\twocolour}{Reward}} + \underbrace{\sum_{o \in \mathcal{O}} \sum_{s' \in \mathcal{S}} \overbrace{G(o \mid a, s')}^\text{\makebox[0pt]{\textcolor{\twocolour}{Observation Probability}}} \sum_{s \in \mathcal{S}} \xi(s) \underbrace{P(s' \mid s, a)}_\text{\makebox[0pt]{\textcolor{\twocolour}{State Probability}}} \int_0^\infty \overbrace{{\e}^{-\beta \tau}}^\text{\makebox[0pt]{\textcolor{\twocolour}{Exponential Discount}}} \underbrace{F(\d \tau \mid s,a,s')}_\text{\makebox[0pt]{\textcolor{\twocolour}{Sojourn Time Probability}}} \overbrace{V^*\big(\xi(\cdot \mid a,\tau, o)\big)}^\text{\makebox[0pt]{\textcolor{\twocolour}{Next State Value}}}\Big]}_\text{\textcolor{\twocolour}{Discounted Expected Future Rewards}}}
\label{eq:POSMDP:discreteStatediscreteAction:valueFunctionOptimal:PF}
\end{equation}
or equivalently using Eq.~\eqref{eq:POSMDP:Q-PF},
\begin{equation}{\footnotesize
V^*(\xi) = \max_{a \in \mathcal{A}(\xi)}\Big[\underbrace{\sum_{s \in \mathcal{S}} \xi(s) R(s, a)}_\text{\textcolor{\twocolour}{Reward}} + \underbrace{\sum_{o \in \mathcal{O}} \sum_{s' \in \mathcal{S}} \overbrace{G(o \mid a, s')}^\text{\makebox[0pt]{\textcolor{\twocolour}{Observation Probability}}} \sum_{s \in \mathcal{S}} \xi(s) \int_0^\infty \overbrace{{\e}^{-\beta \tau}}^\text{\makebox[0pt]{\textcolor{\twocolour}{Exponential Discount}}} \underbrace{Q(\d \tau, s' \mid s, a)}_\text{\makebox[0pt]{\textcolor{\twocolour}{Joint State sojourn-time Probability}}} \overbrace{V^*\big(\xi(\cdot \mid a,\tau, o)\big)}^\text{\makebox[0pt]{\textcolor{\twocolour}{Next State Value}}}\Big]}_\text{\textcolor{\twocolour}{Discounted Expected Future Rewards}}.}
\label{eq:POSMDP:discreteStatediscreteAction:valueFunctionOptimal:Q}
\end{equation}

\subsection{Alpha vectors}
Since the belief simplex $\triangle$ is continuous, the optimal value function $V^*(\xi)$ in Eq.~\eqref{eq:POMDP:discreteStatediscreteAction:valueFunctionOptimal} is computationally intractable to calculate pointwise for every belief state $\xi \in \triangle$. To circumvent this, \citet{Sondik1971} made the following assumption. Consider a two-state $\mathcal{S}=\{s_1, s_2\}$ POMDP. In Fig.~\ref{fig:POMDP:AlphaVectorLinearAssumption:axis}, we represent the belief state on the horizontal axis, while on the vertical axis will be the value function. The agent knows the value with absolute certainty when with absolute certainty it knows in which state it is; see Fig.~\ref{fig:POMDP:AlphaVectorLinearAssumption:values}.
\begin{assumption}
The value is linearly propositional to how certain the agent believes in which state it is.\footnote{While \citet{Sondik1971} did not explicitly state this in his thesis, this assumption is apparent and required to show that the finite state, finite action, finite observation POMDP is piecewise linear and convex.}
\end{assumption}
In other words, linear interpolation fills in the values between the two known values; see Fig.~\ref{fig:POMDP:AlphaVectorLinearAssumption:linear}. The line segment that connects the two values in Fig.~\ref{fig:POMDP:AlphaVectorLinearAssumption:linear} is an example of what \citet{Sondik1971} called an $\alpha$-vector.
\begin{figure}[h!t]
\centering
\subfigure[Beginning to graph the value function over the belief simplex]{
\centering
\begin{tikzpicture}[scale=0.3, >=stealth,thick]
\usetikzlibrary{arrows}

\draw[black!0] (2.5,3)node[below,black]{$1$} -- (2.5,5.5);
\draw[<->] (-1.5,5.75)node[left]{$V$} -- (-1.5,3)node[below]{$0$} -- (3,3)node[right]{$\xi(s_1)$};

\draw[black!0] (2.5,-1.5)node[below,black]{$1$} -- (2.5,1);
\draw[<->] (-1.5,1.25)node[left]{$V$} -- (-1.5,-1.5)node[below]{$0$} -- (3,-1.5)node[right]{$\xi(s_2)$};

\end{tikzpicture}
\label{fig:POMDP:AlphaVectorLinearAssumption:axis}
}\quad
\subfigure[The agent knows what the value is when it knows with absolute certainity in which state it is.]{
\centering
\begin{tikzpicture}[scale=0.3, >=stealth,thick]
\usetikzlibrary{arrows}
\draw[black!50] (2.5,3)node[below,black]{$1$} -- (2.5,5.75);
\draw[<->] (-1.5,6)node[left]{$V$} -- (-1.5,3)node[below]{$0$} -- (3,3)node[right]{$\xi(s_1)$};

\draw[fill=black] (-1.5,4) circle (0.1) node[left]{\footnotesize $V(s_2)$};
\draw[fill=black] (2.5,4.75) circle (0.1) node[right]{\footnotesize $V(s_1)$};

\draw[black!50] (2.5,-1.5)node[below,black]{$1$} -- (2.5,1.25);
\draw[<->] (-1.5,1.5)node[left]{$V$} -- (-1.5,-1.5)node[below]{$0$} -- (3,-1.5)node[right]{$\xi(s_2)$};

\draw[fill=black] (-1.5,0.25) circle (0.1) node[left]{\footnotesize $V(s_1)$};
\draw[fill=black] (2.5,-0.5) circle (0.1) node[right]{\footnotesize $V(s_2)$};

\end{tikzpicture}
\label{fig:POMDP:AlphaVectorLinearAssumption:values}
}\quad
\subfigure[The assumption is that the two values are connected by a linear function.]{
\centering
\begin{tikzpicture}[scale=0.3, >=stealth,thick]
\usetikzlibrary{arrows}
\draw[black!50] (2.5,3)node[below,black]{$1$} -- (2.5,5.75);
\draw[<->] (-1.5,6)node[left]{$V$} -- (-1.5,3)node[below]{$0$} -- (3,3)node[right]{$\xi(s_1)$};

\draw[very thick, \twocolour]  (-1.5,4) -- (2.5,4.75); 

\draw[fill=black] (-1.5,4) circle (0.1) node[left]{\footnotesize $V(s_2)$};
\draw[fill=black] (2.5,4.75) circle (0.1) node[right]{\footnotesize $V(s_1)$};

\draw[black!50] (2.5,-1.5)node[below,black]{$1$} -- (2.5,1.25);
\draw[<->] (-1.5,1.5)node[left]{$V$} -- (-1.5,-1.5)node[below]{$0$} -- (3,-1.5)node[right]{$\xi(s_2)$};

\draw[very thick, \twocolour]  (-1.5,0.25) -- (2.5,-0.5);

\draw[fill=black] (-1.5,0.25) circle (0.1) node[left]{\footnotesize $V(s_1)$};
\draw[fill=black] (2.5,-0.5) circle (0.1) node[right]{\footnotesize $V(s_2)$};

\end{tikzpicture}
\label{fig:POMDP:AlphaVectorLinearAssumption:linear}
}
\caption{An example of an $\alpha$-vector for a two-state $\mathcal{S} = \{s_1, s_2\}$ POMDP}
\label{fig:POMDP:AlphaVectorLinearAssumption}
\end{figure}
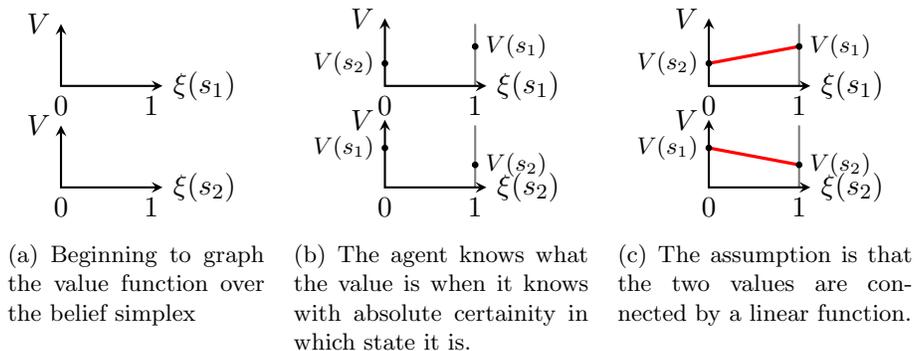
In general, we give the following definition.
\begin{definition}[$\alpha$-vector]
An $\alpha$-vector represents an $|\mathcal{S}|$-dimensional hyperplane that defines the value function for a particular action $a \in \mathcal{A}$ over a bounded region of the belief simplex. In this work, we represent $\alpha$-vectors in matrix form:
\begin{equation*}
\alpha = \begin{bmatrix} \alpha(s_1)\\ \alpha(s_2) \\ \vdots \\ \alpha(s_{|\mathcal{S}|})\end{bmatrix} = \begin{bmatrix} V(s_1) \\ V(s_2) \\ \vdots \\ V(s_{|\mathcal{S}|}) \end{bmatrix}.
\end{equation*}
\end{definition}

While we have generally referred to $V$ as the value function, we can also think of $V$ as a set of $\alpha$ vectors:
\begin{equation}
V = \{ \alpha_1, \alpha_2, \ldots, \alpha_m \}.
\end{equation}
To calculate the optimal value function\footnote{We will denote the optimal value function by $V^*$ and the set of $\alpha$ vectors by $V$.} at a particular belief state is 
\begin{align}
    V^*(\xi) &= \max_{\alpha \in V} \langle \xi, \alpha \rangle \label{eq:AlphaVector:OptimalV}\\
             &= \max_{\alpha \in V} \sum_{s \in \mathcal{S}} \xi(s) \alpha(s), \label{eq:AlphaVector:OptimalV:sum}
\end{align}
and the corresponding optimal policy $\pi^* \colon \xi \to a$ is
\begin{equation}\label{eq:POMDP:pi:optimal:xi,alpha}
\pi^* = \argmax_{i \in \{1, 2, \ldots, |V|\}} \left\langle \xi, \alpha_i\right\rangle.
\end{equation}

\subsection{Approximate Value Iteration}
A critical element to solve a POSMDP is to approximate the integral
\begin{equation}
\mu(\xi, s,o,a,s') = \int_0^\infty {\e}^{-\beta \tau} f(\tau \mid s,a,s') V^*\big(\xi(\cdot \mid a, \tau, o)\big) \d \tau. \label{eq:sojournTimeValueIntegral}
\end{equation}
We solve this problem using the Monte Carlo method with importance sampling described in \citet{Hammersley1964}.

Suppose we have a finite set $C$ whose elements $\tau_1, \tau_2, \ldots, \tau_{|C|}$ are independent sojourn time samples from the probability density function $D$. An unbiased estimator of $\mu(s,o,a,s')$ is
\begin{equation}
\hat{\mu}(\xi, s,o,a,s') = \frac{1}{|C|}\sum_{n = 1}^{|C|} {\e}^{-\beta \tau_n} \frac{f(\tau_n \mid s, a, s')}{D(\tau_n)}V^*\big(\xi(\cdot \mid a, \tau_n, o)\big)\label{eq:sojournTimeValueIntegral:MC}
\end{equation}
where the function $D(\tau_n)$ given by 
$$D(\tau_n) = \sum_{s \in \mathcal{S}} \sum_{a \in \mathcal{A}} \sum_{s' \in \mathcal{S}} w(s,a,s') f(\tau_n \mid s, a, s')$$ 
is a mixture distribution of each sojourn-time distribution for each $s,a,s'$ and the weights $w(s,a,s')$ reflect the proportion of samples that came from each distribution. Note that the weights must sum to one; that is, $$\sum_{s \in \mathcal{S}} \sum_{a \in \mathcal{A}} \sum_{s' \in \mathcal{S}} w(s,a,s') = 1.$$

By replacing the integral \eqref{eq:sojournTimeValueIntegral} with the approximation \eqref{eq:sojournTimeValueIntegral:MC} in the Bellman equation for POSMDP, we get
\begin{align}
V^*(\xi) &= \max_{a \in \mathcal{A}(\xi)}\left[R(\xi, a) + \sum_{o \in \mathcal{O}} \sum_{s' \in \mathcal{S}} G(o \mid a, s') \right. \notag\\
         &\qquad\qquad\quad \left. \sum_{s \in \mathcal{S}} \xi(s)  P(s' \mid s, a) \frac{1}{|C|}\sum_{n = 1}^{|C|} {\e}^{-\beta \tau_n} \frac{f(\tau_n \mid s, a, s')}{D(\tau_n)}V^*\big(\xi(\cdot \mid a, \tau_n, o)\big) \right].
\end{align}
We bring $\displaystyle\frac{1}{|C|} \sum_{n = 1}^{|C|} \frac{{\e}^{-\beta \tau_n}}{D(\tau_n)}$ to the beginning of the second term since it does not depend on state $s$ or observation $o$, yielding
\begin{align}
V^*(\xi) &= \max_{a \in \mathcal{A}(\xi)}\left[R(\xi,a) + \frac{1}{|C|} \sum_{n = 1}^{|C|}\frac{{\e}^{-\beta \tau_n}}{D(\tau_n)} \sum_{o \in \mathcal{O}} \sum_{s' \in \mathcal{S}} G(o \mid a, s') \right. \notag \\
         &\qquad\qquad\qquad \left. \sum_{s \in \mathcal{S}} \xi(s)  P(s' \mid s, a) f(\tau_n \mid s, a, s') V^*\big(\xi(\cdot \mid a, \tau_n, o)\big) \right]
\end{align}
and by Eq.~\eqref{eq:POSMDP:discreteStatediscreteAction:beliefState:POMDP2SMDP:denominator},
\begin{equation}
V^*(\xi) = \max_{a \in \mathcal{A}(\xi)}\left[ R(\xi,a) + \frac{1}{|C|} \sum_{n = 1}^{|C|}\frac{{\e}^{-\beta \tau_n}}{D(\tau_n)} \sum_{o \in \mathcal{O}} P(o \mid \xi, a, \tau_n) V^*\big(\xi(\cdot \mid a, \tau_n, o)\big) \right].\label{eq:POSMDP:discreteStatediscreteAction:approxValueFunction:P(o|xi,a,tau)}
\end{equation}

Next, we follow the procedure similar to POMDPs in \citet{Spaan2005a} to express the value function as a finite set of $\alpha$ vectors. Using Eq.~\eqref{eq:AlphaVector:OptimalV}, we can express the value of the next belief as an inner product as
\begin{equation*}
  V^*(\xi) = \max_{a \in \mathcal{A}(\xi)}\left[ R(\xi,a) + \frac{1}{|C|} \sum_{n = 1}^{|C|}\frac{{\e}^{-\beta \tau_n}}{D(\tau_n)} \sum_{o \in \mathcal{O}} P(o \mid \xi, a, \tau_n) \max_{\alpha \in V} \left\langle \xi(\cdot \mid a, \tau_n, o), \alpha \right\rangle \right].
\end{equation*}
Expressing the inner product as a sum by Eq.~\eqref{eq:AlphaVector:OptimalV:sum}, we have
\begin{equation*}
  V^*(\xi) = \max_{a \in \mathcal{A}(\xi)}\left[ R(\xi,a) + \frac{1}{|C|} \sum_{n = 1}^{|C|}\frac{{\e}^{-\beta \tau_n}}{D(\tau_n)} \sum_{o \in \mathcal{O}} P(o \mid \xi, a, \tau_n) \max_{\alpha \in V} \sum_{s' \in \mathcal{S}} \alpha(s') \xi(s' \mid a, \tau_n, o) \right]
\end{equation*}
and using Eq.~\eqref{eq:POSMDP:discreteStatediscreteAction:beliefState:POMDP2SMDP} for the value of the next belief,
\begin{align}
  V^*(\xi) &= \max_{a \in \mathcal{A}(\xi)} \left[R(\xi,a) + \frac{1}{|C|} \sum_{n = 1}^{|C|}\frac{{\e}^{-\beta \tau_n}}{D(\tau_n)} \sum_{o \in \mathcal{O}} \cancel{P(o \mid \xi, a, \tau_n)} \right. \notag \\
  &\qquad\qquad\qquad\qquad \left.\max_{\alpha \in V} \sum_{s' \in \mathcal{S}} \alpha(s') \frac{G(o \mid a, s') \displaystyle\sum_{s \in \mathcal{S}} \xi(s)P(s' \mid s, a) f(\tau_n \mid s, a, s') }{\cancel{P(o \mid \xi, a, \tau_n)}} \right]. \notag \\
    &= \max_{a \in \mathcal{A}(\xi)} \left[ R(\xi,a) + \frac{1}{|C|} \sum_{n = 1}^{|C|}\frac{{\e}^{-\beta \tau_n}}{D(\tau_n)} \sum_{o \in \mathcal{O}} \max_{\alpha \in V} \sum_{s' \in \mathcal{S}} \alpha(s') G(o \mid a, s') \right.\\
    &\qquad\qquad\qquad\qquad\qquad\qquad\qquad\qquad\left. \displaystyle\sum_{s \in \mathcal{S}} \xi(s)P(s' \mid s, a) f(\tau_n \mid s, a, s') \right] \notag \\
\intertext{Note how this allows the elimination of the computationally expensive term $P(o \mid \xi, a, \tau_n)$. Rearranging the order of the summations and the terms so that we have an $\alpha$-vector yields,}
	V^*(\xi) &= \max_{a \in \mathcal{A}(\xi)}\left[ R(\xi,a) + \frac{1}{|C|} \sum_{n = 1}^{|C|}\frac{{\e}^{-\beta \tau_n}}{D(\tau_n)} \sum_{o \in \mathcal{O}} \max_{\alpha \in V} \sum_{s \in \mathcal{S}} \xi(s) \right. \notag\\
	         &\qquad\qquad\qquad\qquad\qquad\qquad \left.\underbrace{\sum_{s' \in \mathcal{S}} \alpha(s') G(o \mid a, s')  P(s' \mid s, a) f(\tau_n \mid s, a, s')}_{\alpha(s \mid a, \tau_n, o)} \right] \notag \\
	&= \max_{a \in \mathcal{A}(\xi)}\left[ R(\xi,a) + \frac{1}{|C|} \sum_{n = 1}^{|C|}\frac{{\e}^{-\beta \tau_n}}{D(\tau_n)} \sum_{o \in \mathcal{O}} \max_{\alpha \in V} \sum_{s \in \mathcal{S}} \xi(s) \alpha(s \mid a, \tau_n, o) \right], \notag 
\end{align}
and finally by Eq.~\eqref{eq:AlphaVector:OptimalV},
\begin{equation}
	V^*(\xi) = \max_{a \in \mathcal{A}(\xi)} \left[ R(\xi,a) + \frac{1}{|C|} \sum_{n = 1}^{|C|}\frac{{\e}^{-\beta \tau_n}}{D(\tau_n)} \sum_{o \in \mathcal{O}} \max_{\alpha \in V} \left\langle \xi, \alpha(\cdot \mid a, \tau_n, o)\right\rangle \right],
\end{equation}
which is the approximate form of the POSMDP value function that we will use for the \textsc{ChronosPerseus} algorithm.

We can now write a concise backup operation that generates a new $\alpha$ vector for a specific belief $\xi$; that is,
\begin{equation}
\textsc{backup}(V, C, w, \xi) = \argmax_{\textcolor{\twocolour}{\alpha(\cdot \mid \xi, a)} \colon a \in \mathcal{A}, \alpha \in V} \left\langle \xi, \textcolor{\twocolour}{\alpha(\cdot \mid \xi, a)} \right\rangle \label{eq:POSMDP:discreteStatediscreteAction:beliefState:POSMDP2SMDP:backup}
\end{equation}
where for all $s \in \mathcal{S}$,
\begin{equation}
\textcolor{\twocolour}{\alpha(s \mid \xi, a)} = R(s,a) + \frac{1}{|C|} \sum_{n = 1}^{|C|}\frac{{\e}^{-\beta \tau_n}}{D(\tau_n)} \sum_{o \in \mathcal{O}} \argmax_{\textcolor{\threecolour}{\alpha(\cdot \mid a, \tau_n, o)} \colon \alpha \in V} \left\langle \xi, \textcolor{\threecolour}{\alpha(\cdot \mid a, \tau_n, o)} \right\rangle,
\label{eq:POSMDP:discreteStatediscreteAction:beliefState:POSMDP2SMDP:backup2}
\end{equation}
and
\begin{equation}
 \textcolor{\threecolour}{\alpha(s \mid a, \tau_n, o)} = \sum_{s' \in \mathcal{S}}  G(o \mid a, s') P(s' \mid s, a) f(\tau_n \mid s, a, s')\, \alpha(s').
\label{eq:POSMDP:discreteStatediscreteAction:beliefState:POSMDP2SMDP:backup3}
\end{equation}

\subsection{The Algorithm}

We present now the resulting contribution, the \textsc{ChronosPerseus} algorithm, which is an approximate point-based value iteration with importance sampling for POSMDPs. It proceeds in two steps as specified in Algorithm~\ref{alg:ChronosPerseus}: first, collect beliefs and sojourn time samples, and then approximate the value function using the backup operator.

The first step, as outlined in Algorithm~\ref{alg:ChronosPerseus:collectBeliefs}, is to let the agent explore the environment and collect a finite set of beliefs $B$, a finite set of sojourn times $C$, and the weights $w(s,a,s')$ that reflect the proportion of samples that came from each state-action-state transition; after these are collected, they remain fixed for the rest of the algorithm. In Step 2, as outlined in Algorithm~\ref{alg:ChronosPerseus:update}, from the initial value function $V_0$, the algorithm will continue to perform backups until a convergence criterion is achieved.

\begin{theorem}
Let $M = \min_{(s,a) \in \mathcal{K}} R(s,a)$. The initial value function $V_0$ is a single $\alpha$-vector with 
\begin{equation*}
    V_0(s) = \frac{M}{1 - \lambda} \qquad \forall s \in \mathcal{S},
\end{equation*}
where
\begin{equation*}
    \lambda = \begin{cases}
        \displaystyle\min_{1 \leq n \leq |C|} \frac{{\e}^{-\beta \tau_n}}{D(\tau_n)}, & \text{if $M \geq 0$}; \vspace{2mm}\\
        \displaystyle\max_{1 \leq n \leq |C|} \frac{{\e}^{-\beta \tau_n}}{D(\tau_n)}, & \text{if $M < 0$}.
    \end{cases}
\end{equation*}
\end{theorem}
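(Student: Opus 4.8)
The plan is to certify $V_0$ as the \textsc{Perseus}-style pessimistic seed: a single $\alpha$-vector that is a \emph{sub-solution} of the backup operator of Eqs.~\eqref{eq:POSMDP:discreteStatediscreteAction:beliefState:POSMDP2SMDP:backup}--\eqref{eq:POSMDP:discreteStatediscreteAction:beliefState:POSMDP2SMDP:backup3}, i.e.\ $\langle\xi,\textsc{backup}(V_0,C,w,\xi)\rangle \ge V_0(\xi)$ for every $\xi\in\triangle$. This is precisely what makes the value-function iterates produced in Step~2 of Algorithm~\ref{alg:ChronosPerseus} monotonically non-decreasing (hence convergent), exactly as in \citet{Spaan2005a}; by monotonicity of the backup it also yields $V_0 \le V^*$, since $V_0 \le HV_0 \le H^2V_0 \le \cdots \to V^*$.

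First I would apply one backup to the constant vector $\alpha_0 \equiv v$, with $v := M/(1-\lambda)$. Because $V_0$ holds a single vector, the inner $\argmax$ over $\alpha\in V$ in Eq.~\eqref{eq:POSMDP:discreteStatediscreteAction:beliefState:POSMDP2SMDP:backup2} is vacuous, and using $\sum_{o\in\mathcal O}G(o\mid a,s')=1$, Eqs.~\eqref{eq:POSMDP:discreteStatediscreteAction:beliefState:POSMDP2SMDP:backup2}--\eqref{eq:POSMDP:discreteStatediscreteAction:beliefState:POSMDP2SMDP:backup3} collapse to
\begin{equation*}
\alpha_0(s\mid\xi,a) = R(s,a) + v\,\frac{1}{|C|}\sum_{n=1}^{|C|}\frac{\e^{-\beta\tau_n}}{D(\tau_n)}\sum_{s'\in\mathcal S} P(s'\mid s,a)\,f(\tau_n\mid s,a,s') ,
\end{equation*}
so that $\langle\xi,\textsc{backup}(V_0,C,w,\xi)\rangle = \max_{a}\big[R(\xi,a) + v\,\Lambda_a(\xi)\big]$, where $\Lambda_a(\xi)\ge 0$ is the importance-weighted, sojourn-time- (and $\xi$-, $P(\cdot\mid\cdot,a)$-) averaged discount factor appearing above.

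Next I would replace $\Lambda_a(\xi)$ by an extreme per-sample weight. When $M\ge 0$ we have $v\ge 0$, so substituting $\lambda = \min_n \e^{-\beta\tau_n}/D(\tau_n)$ for each weight can only decrease $v\,\Lambda_a(\xi)$; when $M<0$ we have $v\le 0$, so $\lambda = \max_n \e^{-\beta\tau_n}/D(\tau_n)$ is the conservative substitute. Together with $R(\xi,a)\ge M$ for every admissible $(\xi,a)$, this gives $\langle\xi,\textsc{backup}(V_0,C,w,\xi)\rangle \ge M + \lambda v$, and the geometric-series identity $M + \lambda v = M + \lambda\,\tfrac{M}{1-\lambda} = \tfrac{M}{1-\lambda} = v$ closes the bound, i.e.\ $\langle\xi,\textsc{backup}(V_0,C,w,\xi)\rangle \ge v = V_0(\xi)$. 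The same identity explains the formula: $M/(1-\lambda) = M\sum_{k\ge 0}\lambda^k$ is the return of an agent collecting the minimal per-stage reward $M$ at every decision epoch under the worst-case effective discount $\lambda$, so nothing can do worse.

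The hard part will be making this substitution step rigorous. The weight $\e^{-\beta\tau_n}/D(\tau_n)$ never appears alone in the backup --- it is multiplied by the nonnegative densities $f(\tau_n\mid s,a,s')$ and probabilities $P(s'\mid s,a)$ and then Monte-Carlo averaged over $C$ --- so one must argue that the remaining nonnegative factors aggregate so that the extremal weight dominates in the right direction, and must check that $\lambda\in(0,1)$ on the relevant branch so that $M/(1-\lambda)$ is finite and the series converges. This is exactly where the sign split on $M$, the nonnegativity of all coefficients in $\alpha_0(\cdot\mid\xi,a)$, and the standing assumptions (Assumptions~\ref{assumption:POSMDP:finiteNumberDecisionEpoch}--\ref{POSMDP:assumption:rewardBoundM}, controlling the rewards and the expected sojourn times) enter.
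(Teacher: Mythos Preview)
Your plan is the paper's argument: apply one backup to the constant vector $v=M/(1-\lambda)$, replace each per-sample weight $\e^{-\beta\tau_n}/D(\tau_n)$ by its extremum $\lambda$ (min if $v\ge 0$, max if $v\le 0$), bound $R(\xi,a)\ge M$, and close with $M+\lambda v=M/(1-\lambda)$. The only cosmetic difference is that the paper works in the belief-space form of Eq.~\eqref{eq:POSMDP:discreteStatediscreteAction:approxValueFunction:P(o|xi,a,tau)} and, after substituting $\lambda$, also fixes $\tau=\tau_{n^*}$ in the remaining factors $P(o\mid\xi,a,\tau)\,V(\xi(\cdot\mid a,\tau,o))$ before collapsing them; you instead stay at the $\alpha$-vector level and keep the density factors $f(\tau_n\mid s,a,s')$ explicit. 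The ``hard part'' you single out---that the residual nonnegative coefficients left after pulling out $\lambda$ must aggregate to~$1$ so that the bound reads $M+\lambda v$ rather than $M+\lambda v\cdot(\text{something})$---is exactly the step the paper takes without further comment (it tacitly uses $\sum_o P(o\mid\xi,a,\tau)=1$), so your concern is well placed but does not distinguish your route from the paper's.
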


\begin{proof}
Since $\mathcal{S}$ and $\mathcal{A}$ are finite sets, the per-stage minimum reward $M$ exists by Assumption~\ref{POSMDP:assumption:rewardBoundM}. 

If $M = 0$, then this is trivial: set $\alpha(s) = 0$ for all $s \in \mathcal{S}$. 

For $M > 0$, we start with Eq.~\eqref{eq:POSMDP:discreteStatediscreteAction:approxValueFunction:P(o|xi,a,tau)}, for any belief state $\xi$,
\begin{equation*}
V(\xi) = \max_{a \in \mathcal{A}(\xi)}\left[ R(\xi,a) + \frac{1}{|C|} \sum_{n = 1}^{|C|}\frac{{\e}^{-\beta \tau_n}}{D(\tau_n)} \sum_{o \in \mathcal{O}} P(o \mid \xi, a, \tau_n) V\big(\xi(\cdot \mid a, \tau_n, o)\big) \right].
\end{equation*}
Let $\lambda_n = \displaystyle\frac{{\e}^{-\beta \tau_n}}{D(\tau_n)}$, and $0 < \lambda < 1$. Since $M$ is positive, we would like the discount to be as small as possible. So, we define
\begin{equation*}
    \lambda = \min_{1 \leq n \leq |C|} \lambda_n, \qquad n^* = \argmin_{1 \leq n \leq |C|} \lambda_n, \qquad \text{and} \qquad \tau = \tau_{n*},
\end{equation*}
so that
\begin{align*}
    V(\xi) &\geq \max_{a \in \mathcal{A}(\xi)}\left[ R(\xi,a) + \frac{1}{|C|} \sum_{n = 1}^{|C|}\lambda \sum_{o \in \mathcal{O}} P(o \mid \xi, a, \tau) V\big(\xi(\cdot \mid a, \tau, o)\big) \right]\\
      &= \max_{a \in \mathcal{A}(\xi)}\left[ R(\xi,a) + \lambda \sum_{o \in \mathcal{O}} P(o \mid \xi, a, \tau) V\big(\xi(\cdot \mid a, \tau, o)\big) \right]\\
      &\geq \max_{a \in \mathcal{A}(\xi)}\left[ M + \lambda \left(\frac{M}{1 - \lambda}\right) \right]\\
      &= M + \frac{\lambda M}{1 - \lambda}\\
      &= \frac{M}{1 - \lambda} = V_0.
\end{align*}

If $M < 0$, then it is a negative reward, and so the discount should be as large as possible, so 
\begin{equation*}
    \lambda = \max_{1 \leq n \leq |C|} \lambda_n, \qquad n^* = \argmax_{1 \leq n \leq |C|} \lambda_n, \qquad \text{and} \qquad \tau = \tau_{n*}.
\end{equation*}
\end{proof}

\begin{center}
\begin{minipage}[t]{0.5\textwidth}
      \begin{algorithm}[H]
      \caption{\textsc{ChronosPerseus}}\label{alg:ChronosPerseus}
        \begin{algorithmic}[1]{\small
            \Function{ChronosPerseus}{$n$, $\xi_0$, $V_0$, $\epsilon$}
                    \State $(B, C, w) \gets$ \Call{collectBeliefs}{$n, \xi_0$}
                    \State $V' \gets V_0$
                    \Repeat
                    \State $V \gets V'$
                    \State $V' \gets$ \Call{update}{$B, C, w, V$}
                    \Until{$||V - V'||_{\infty, B} < \epsilon$}
                    \State \Return $V$
                \EndFunction}
            \end{algorithmic}
        \end{algorithm}
\end{minipage}
\end{center}

\begin{center}
\begin{minipage}[t]{\textwidth}
\begin{algorithm}[H]
\caption{\textsc{CollectBelief}}\label{alg:ChronosPerseus:collectBeliefs}
\begin{algorithmic}[1]
{\small
\Function{collectBeliefs}{$n, \xi_0$}
 \State $B \gets \{\xi_0\}$, $C \gets \varnothing$, $w \gets 0$ \Comment{Initialization}
 \Repeat
   \State Randomly select the belief state $\xi \in B$
	 \State{Generate state $s$ according to the multinomial distribution with weights $\xi$}
	 \State Randomly select an action $a \in \mathcal{A}$
	 \State Randomly select state $s'$ according to $P(\cdot \mid s, a)$
     \State{Generate a random sojourn-time $\tau$ according to  $f(\tau \mid s, a, s')$}
	 \State $C \gets C \cup \{\tau\}$, $w(s,a,s') \gets w(s,a,s') + 1$ \Comment{Add $\tau$ to set $C$ and update counts}
	 \State{Randomly select $o \in \mathcal{O}$ according to Eq.~\eqref{eq:POSMDP:discreteStatediscreteAction:beliefState:POMDP2SMDP:denominator}}
	 \State Update $\xi(\cdot \mid a, \tau, o)$ according to Eq.~\eqref{eq:POSMDP:discreteStatediscreteAction:beliefState:POMDP2SMDP}
	 \State $B \gets B \cup \{\xi(\cdot \mid a, \tau, o)\}$ \Comment{Add new belief to set $B$}
 \Until{$|B| = n$}
 \State $w \gets \displaystyle\frac{w}{||w||}$ \Comment{Normalize $w$; $||w|| = \displaystyle\sum_{s\in\mathcal{S}}\sum_{a \in \mathcal{A}}\sum_{s' \in \mathcal{S}} w(s,a,s')$.}
 \State \Return $B$, $C$, and $w$
\EndFunction
}
\end{algorithmic}
\end{algorithm}
\end{minipage} 
\end{center}

\begin{center}
\begin{minipage}[t]{0.6\textwidth}
\begin{algorithm}[H]
\caption{\textsc{Update}}\label{alg:ChronosPerseus:update}
\begin{algorithmic}[1]
{\small
\Function{update}{$B, C, w, V$}
   \State $B' \gets B$, $V' \gets \varnothing$ \Comment{Initialization}
	\While{$B' \neq \varnothing$}
	     \State Randomly select a belief $\xi \in B'$
		 \State $\alpha \gets$ \Call{backup}{$V, C, w, \xi$} \Comment{Eq \eqref{eq:POSMDP:discreteStatediscreteAction:beliefState:POSMDP2SMDP:backup}}
		 \If{$\langle \xi, \alpha \rangle < V(\xi)$} 
		 \State $\alpha \gets \displaystyle\argmax_{\alpha' \in V} \langle \xi, \alpha' \rangle$ 
	 \EndIf
     \State $B' \gets B' \smallsetminus \{\varsigma \in B' \mid \langle \varsigma, \alpha \rangle \geq V(\varsigma)\}$ 
	 \State $V' \gets V' \cup \{\alpha\}$ 
	 \EndWhile
	 \State $V \gets V'$
 \State \Return $V$
\EndFunction
}
\end{algorithmic}
\end{algorithm}
\end{minipage}
\end{center}

The complexity of computing Eq.~\eqref{eq:POSMDP:discreteStatediscreteAction:beliefState:POSMDP2SMDP:backup3} is $O(|\mathcal{S}|^2)$ since it needs to be calculated for every $(s,s')$ tuple, and it is done for every $\alpha \in V$, hence computing all $\alpha(\cdot \mid a, \tau, o)$ for every $a \in \mathcal{A}$, $\tau \in C$, and $o \in \mathcal{O}$ requires 
$O(|V| \times |\mathcal{S}|^2 \times |\mathcal{A}| \times |C| \times |\mathcal{O}|).$ The complexity of computing Eq.~\eqref{eq:POSMDP:discreteStatediscreteAction:beliefState:POSMDP2SMDP:backup2} requires the computation of all relevant $\alpha(\cdot \mid a, \tau, o)$, but then the summation and inner products require only $O(|\mathcal{S}| \times |C| \times |\mathcal{O}|)$ operations and another $O(|\mathcal{S}|)$ operations to add the reward (vector). Lastly, the \textsc{Backup} (Eq.~\eqref{eq:POSMDP:discreteStatediscreteAction:beliefState:POSMDP2SMDP:backup}) requires for all $\alpha(\cdot \mid \xi, a)$ another $O(|\mathcal{S}|)$ operations for the inner product.  Therefore, the complexity of the \textsc{Backup} operation requires
\begin{equation}
O(|V| \times |\mathcal{S}|^2 \times |\mathcal{A}| \times |C| \times |\mathcal{O}|).
\end{equation}
Given there are no more beliefs than observed sojourns time, $|B| \leq |C|$, the complexity of an \textsc{Update} iteration is therefore 
\begin{equation}
O(|V| \times |\mathcal{S}|^2 \times |\mathcal{A}| \times |C|^2 \times |\mathcal{O}|).
\end{equation}

\section{Applications}\label{sec:Applications}

We now apply \textsc{ChronosPerseus} to two interesting problems. The first is an episodic toy problem related to the often encountered situation in our daily life in which our waiting time influences our belief and, in turn, our decision. In this simple scenario, an agent is riding a bus at an unknown traffic intensity with the ability to give-up and ride its bike instead at any given bus stop. The second problem is a real application to a routine maintenance problem first presented with a continuous observation space by \citet{Zhang2017}.

These two examples will show how \textsc{ChronosPerseus} can fully solve real POSMDP problems with mixed observability, a mixture of observable continuous and discrete random sojourn times, discrete or continuous observation space, and episodic or non-episodic tasks.

\subsection{The Bus Problem}\label{sec:TheBusProblem}
We introduce the bus problem, an episodic toy example, that demonstrates \textsc{ChronosPerseus} follows our human intuition.

Suppose the agent boarded a bus and brought along a bicycle, which has been attached to the bus' rack. There are four bus stops until the agent reaches the final destination. There are three levels of traffic intensity $i$ (low $i = 1$, medium $i = 2$, and high $i = 3$) that remains the same throughout the agent's bus journey, but the traffic intensity $i$ is unknown to the agent.  When the bus is at stop $s$, the agent can either elect to stay on the bus, or to exit the bus and ride its bike directly to the destination (the last stop). Reaching the destination leads to a (lump sum) reward, but the longer it takes to reach it, the more discounted it will be. Suppose the agent decides to continue. In that case, the next state will be $s + 1$ with sojourn time $\tau$ according to the probability density function $f\big(\tau, (s+1, i) \mid (s, i)\big)$. If the agent chooses to take the bike, it takes a fixed time to reach the destination depending on the current stop $s$.

\begin{figure}[h!tbp]
\centering
\input{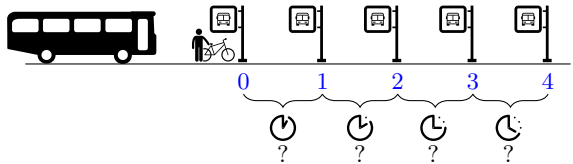}
\caption{The agent waiting for the bus.}
\label{fig:example:BusProblem}
\end{figure}

Intuitively, we would expect that the agent should ride the bus long enough to deduce which traffic intensity level: 
\begin{itemize}
    \item if the traffic intensity is low, then the agent would ride the bus to the end; \item if the traffic intensity is high, then the agent would get off at the next stop and ride the bike to the end; and
    \item if the traffic intensity is medium, then the agent would have to balance the advantage of taking the bus versus riding the bike.
\end{itemize}

With this intuition in mind, we can model this problem formally using the POSMDP framework. The state space consists of the bus stops and the three levels of traffic intensity, 
\begin{equation}
\mathcal{S} = \overbrace{\underbrace{\{ \textcolor{\threecolour}{0}, \textcolor{\threecolour}{1}, \textcolor{\threecolour}{2}, \textcolor{\threecolour}{3}, \textcolor{\threecolour}{4}\}}_\text{bus stops}}^\text{\textcolor{\threecolour}{observable}} \times \overbrace{\underbrace{\{ \textcolor{\twocolour}{1}, \textcolor{\twocolour}{2}, \textcolor{\twocolour}{3}\}}_\text{traffic}}^\text{\textcolor{\twocolour}{hidden}},
\end{equation}
which means the number of states is 
\begin{equation}
|\mathcal{S}| = 5 \times 3 = 15.
\end{equation}
A typical state would be the pair $(\textcolor{\threecolour}{s}, \textcolor{\twocolour}{i}) \in \mathcal{S}$, where the bus is at stop $\textcolor{\threecolour}{s}$ and the traffic intensity is $\textcolor{\twocolour}{i}$. This state is mixed observable since the agent can observe perfectly at which bus stop $\textcolor{\threecolour}{s}$ it is, while it cannot observe directly the traffic intensity $\textcolor{\twocolour}{i}$. We will assume that the traffic intensity remains $\textcolor{\twocolour}{i}$ throughout the entire bus ride; that is, $$(0,\textcolor{\twocolour}{i}) \to (1, \textcolor{\twocolour}{i}) \to (2, \textcolor{\twocolour}{i}) \to (3, \textcolor{\twocolour}{i}) \to (4, \textcolor{\twocolour}{i}).$$

The action space consists of two actions
\begin{equation}
\mathcal{A} = \{\text{bus}, \text{bike}\}.
\end{equation}

The observation space consists of the bus stops that the agent observes,
\begin{equation}
\mathcal{O} = \underbrace{\{ \textcolor{\threecolour}{0}, \textcolor{\threecolour}{1}, \textcolor{\threecolour}{2}, \textcolor{\threecolour}{3}, \textcolor{\threecolour}{4}\}}_\text{bus stops}.
\end{equation}
Please note that the observation space is not the same as the state space; in other words, $\mathcal{O} \neq \mathcal{S}$. Since only a component of the state is fully known to the agent, this problem is called mixed observable \citep{Ong2010}.  We will discuss this when we create the observation transition function.

If the agent decides to continue riding the bus, the traffic intensity $i$ does not change, but the bus stop changes from $s$ to $s + 1$. When the last stop is reached ($s=4$), the problem resets probabilistically by placing the agent at bus stop $s=0$ independently of the action selected, and randomly selecting a new traffic intensity $i$ with equal probability; this is reflected in the probability $P\big((0, i') \mid (4, i), \text{bus}\big) = \frac13$.

To illustrate this visually, a state transition diagram for the bus action is given in Fig.~\ref{fig:POSMDP:busProblem:BusAction}. Thus, the transition probability matrices for continuing to ride the bus are
\begin{align}
P\big((\textcolor{\threecolour}{\cdot}, i) \mid (\textcolor{\twocolour}{\cdot}, i), \text{bus}\big) &= \begin{bNiceMatrix}[first-col, first-row, code-for-first-col={\scriptstyle\color{\twocolour}}, code-for-first-row={\scriptstyle\color{\threecolour}}, nullify-dots] & s' = 0 & s' = 1 & s' = 2 & s'= 3 & s' = 4\\ 
                                                                 s = 0 & 0       & 1      & 0      & \Cdots & 0\\
                                                                 s = 1 & \Vdots  & \Ddots & \Ddots & \Ddots & \Vdots\\
                                                                 s = 2 &         &        &        &        & 0\\
                                                                 s = 3 & 0       & \Cdots &        & 0      & 1\\
                                                                 s = 4 & \frac13 & 0      & \Cdots &        & 0\\
                                                         \end{bNiceMatrix}\\
\intertext{and}
P\big((\textcolor{\threecolour}{\cdot}, i' \neq i) \mid (\textcolor{\twocolour}{\cdot}, i), \text{bus}\big) &= \begin{bNiceMatrix}[first-col, first-row, code-for-first-col={\scriptstyle\color{\twocolour}}, code-for-first-row={\scriptstyle\color{\threecolour}}, nullify-dots] & s' = 0 & s' = 1 & s' = 2 & s'= 3 & s' = 4\\
                                                                 s = 0 & 0       & \Cdots &        &         & 0\\
                                                                 s = 1 & \Vdots  &        &        &         & \Vdots\\
                                                                 s = 2 &         &        &        &         &  \\
                                                                 s = 3 & 0       & \Cdots &        &         & 0 \\
                                                                 s = 4 & \frac13 & 0      & \Cdots &         & 0\\
                                                         \end{bNiceMatrix}
\end{align}
for all traffic intensities $i \in \{1, 2, 3\}$.
If at stop $s \neq 4$ the agent decides to stop riding the bus and ride the bike to the final stop, the traffic intensity $i$ does not change, but the bus stop changes directly from $s$ to $s = 4$. Again, when the last stop is reached, the problem resets probabilistically by placing the agent at bus stop $0$ independently of the selected action, and randomly selecting a new traffic intensity $i$ with equal probability. Visually, this can be illustrated with a state diagram for the bike action, which is given in Fig.~\ref{fig:POSMDP:busProblem:BikeAction}.  Thus, the transition probability matrices for getting off the bus and riding the bike are
\begin{align}
P\big((\textcolor{\threecolour}{\cdot}, i) \mid (\textcolor{\twocolour}{\cdot}, i), \text{bike}\big) &= \begin{bNiceMatrix}[first-col, first-row, code-for-first-col={\scriptstyle\color{\twocolour}}, code-for-first-row={\scriptstyle\color{\threecolour}}, nullify-dots] & s' = 0 & s' = 1 & s' = 2 & s'= 3 & s' = 4\\
                                                                 s = 0 & 0       & \Cdots &        & 0       & 1\\
                                                                 s = 1 & \Vdots  &        &        & \Vdots  & \Vdots\\
                                                                 s = 2 &         &        &        &         &  \\
                                                                 s = 3 & 0       & \Cdots &        & 0       & 1 \\
                                                                 s = 4 & \frac13 & 0      & \Cdots &         & 0\\
                                                         \end{bNiceMatrix}\\
\intertext{and}
P\big((\textcolor{\threecolour}{\cdot}, i' \neq i) \mid (\textcolor{\twocolour}{\cdot}, i), \text{bike}\big) &= \begin{bNiceMatrix}[first-col, first-row, code-for-first-col={\scriptstyle\color{\twocolour}}, code-for-first-row={\scriptstyle\color{\threecolour}}, nullify-dots] & s' = 0 & s' = 1 & s' = 2 & s'= 3 & s' = 4\\
                                                                 s = 0 & 0       & \Cdots &        &         & 0\\
                                                                 s = 1 & \Vdots  &        &        &         & \Vdots\\
                                                                 s = 2 &         &        &        &         &  \\
                                                                 s = 3 & 0       & \Cdots &        &         & 0 \\
                                                                 s = 4 & \frac13 & 0      & \Cdots &         & 0\\
                                                         \end{bNiceMatrix}
\end{align}
for all traffic intensities $i \in \{1, 2, 3\}$. 

\begin{figure}[h!tbp]
\centering
\begin{tikzpicture}[->,>=stealth,shorten >=1pt,auto, thick,node distance=1.3cm]
\tikzstyle{every state}=[fill=\twocolour,draw=none,text=white]

\draw[fill=black!20, black!20] (-0.5,-0.75) rectangle (0.5,-7);
\node[below] at (0,-7) {\footnotesize Low intensity};

\draw[fill=black!20, black!20, rotate=135] (-0.5,-0.75) rectangle (0.5,-7);
\node[above, rotate=-45] at (4.95,4.925) {\footnotesize Medium intensity};

\draw[fill=black!20, black!20, rotate=-135] (-0.5,-0.75) rectangle (0.5,-7);
\node[above, rotate=45] at (-4.925,4.925) {\footnotesize High intensity};

\node[state] (S){\footnotesize Start};

\node[state] (S01) [below of = S]{\scriptsize $(0,1)$};
\node[state] (S11) [below of=S01]{\scriptsize $(1,1)$};
\path (S01) edge node[left]{\footnotesize $1$} (S11);
\node[state] (S21) [below of=S11]{\scriptsize $(2,1)$};
\path (S11) edge node[left]{\footnotesize $1$} (S21);
\node[state] (S31) [below of=S21]{\scriptsize $(3,1)$};
\path (S21) edge node[left]{\footnotesize $1$} (S31);
\node[state] (S41) [below of=S31]{\scriptsize $(4,1)$};
\path (S31) edge node[left]{\footnotesize $1$} (S41);

\node[state] (S02)[above right of=S]{\scriptsize $(0,2)$};
\node[state] (S12)[above right of=S02]{\scriptsize $(1,2)$};
\path (S02) edge node{\footnotesize $1$} (S12);
\node[state] (S22)[above right of=S12]{\scriptsize $(2,2)$};
\path (S12) edge node{\footnotesize $1$} (S22);
\node[state] (S32)[above right of=S22]{\scriptsize $(3,2)$};
\path (S22) edge node{\footnotesize $1$} (S32);
\node[state] (S42)[above right of=S32]{\scriptsize $(4,2)$};
\path (S32) edge node{\footnotesize $1$} (S42);

\node[state] (S03)[above left of=S]{\scriptsize $(0,3)$};
\node[state] (S13)[above left of=S03]{\scriptsize $(1,3)$};
\path (S03) edge[above right] node{\footnotesize $1$} (S13);
\node[state] (S23)[above left of=S13]{\scriptsize $(2,3)$};
\path (S13) edge[above right] node{\footnotesize $1$} (S23);
\node[state] (S33)[above left of=S23]{\scriptsize $(3,3)$};
\path (S23) edge[above right] node{\footnotesize $1$} (S33);
\node[state] (S43)[above left of=S33]{\scriptsize $(4,3)$};
\path (S33) edge[above right] node{\footnotesize $1$} (S43);

\path (S) edge node[left]{\footnotesize $\frac13$} (S01);
\path (S) edge node[]{\footnotesize $\frac13$} (S02);
\path (S) edge node[above right]{\footnotesize $\frac13$} (S03);

\path (S41) edge [bend right] node[right]{\footnotesize $\frac13$} (S01);
\path (S41) edge [bend right] node[right]{\footnotesize $\frac13$} (S02);
\path (S41) edge [bend left] node[left]{\footnotesize $\frac13$} (S03);
\path (S42) edge [bend left] node[left, xshift=-1ex]{\footnotesize $\frac13$} (S02);
\path (S42) edge [bend left] node{\footnotesize $\frac13$} (S01);
\path (S42) edge [bend right] node{\footnotesize $\frac13$} (S03);
\path (S43) edge [bend right=30] node[left, xshift=-1ex]{\footnotesize $\frac13$} (S03);
\path (S43) edge [bend left] node{\footnotesize $\frac13$} (S02);
\path (S43) edge [bend right] node[left]{\footnotesize $\frac13$} (S01);
\end{tikzpicture}
\caption{Bus problem state transition diagram when the agent selects action $a =$ bus. The problem begin at stop $s=0$, with one of the randomly selected initial intensity $i$. The bus action allows the agent to go from one bus stop to the next while traffic intensity remains constant, until it reaches the end of the bus line. A the end, it is returned to the beginning of the bus line for a new randomly selected intensity.}
\label{fig:POSMDP:busProblem:BusAction}
\end{figure}
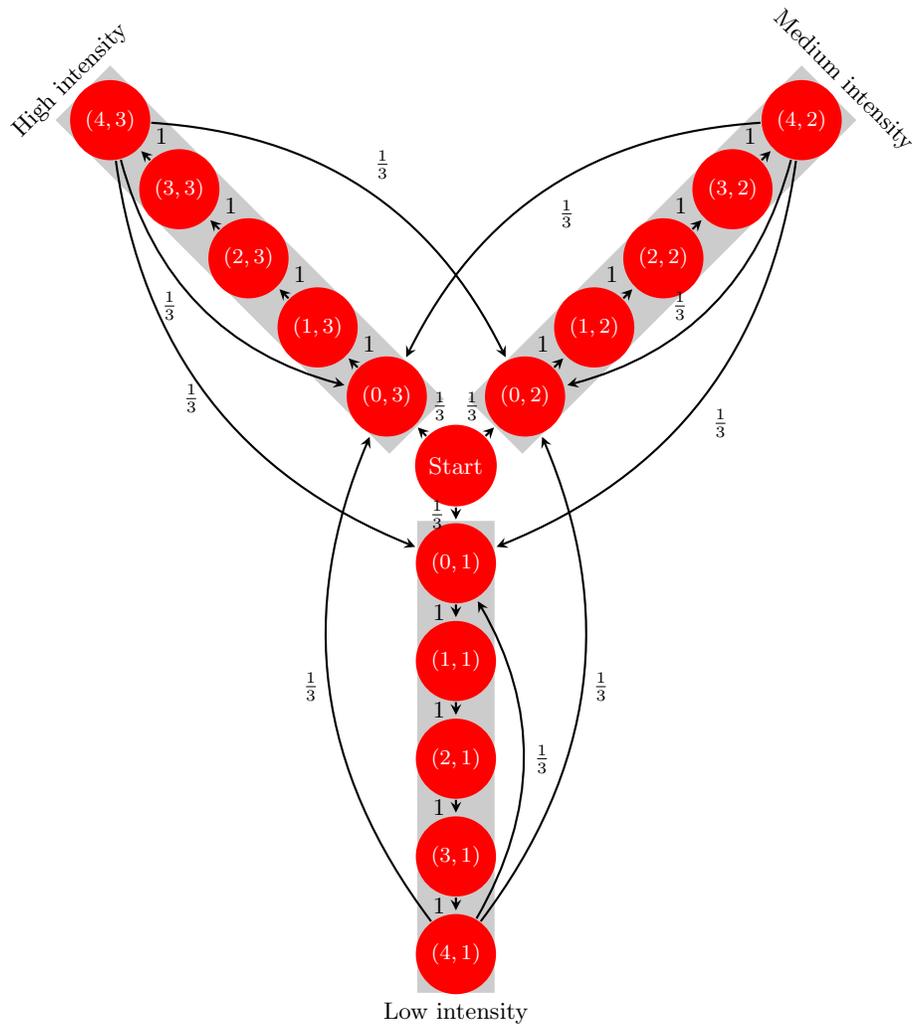

\begin{figure}[h!tbp]
\centering
\begin{tikzpicture}[->,>=stealth,shorten >=1pt,auto, thick,node distance=1.3cm]
\tikzstyle{every state}=[fill=\twocolour,draw=none,text=white]

\draw[fill=black!20, black!20] (-0.5,-0.75) rectangle (0.5,-7);
\node[below] at (0,-7) {\footnotesize Low intensity};

\draw[fill=black!20, black!20, rotate=135] (-0.5,-0.75) rectangle (0.5,-7);
\node[above, rotate=-45] at (4.95,4.925) {\footnotesize Medium intensity};

\draw[fill=black!20, black!20, rotate=-135] (-0.5,-0.75) rectangle (0.5,-7);
\node[above, rotate=45] at (-4.925,4.925) {\footnotesize High intensity};

\node[state] (S){\footnotesize Start};

\node[state] (S01) [below of = S]{\scriptsize $(0,1)$};
\node[state] (S11) [below of=S01]{\scriptsize $(1,1)$};
\node[state] (S21) [below of=S11]{\scriptsize $(2,1)$};
\node[state] (S31) [below of=S21]{\scriptsize $(3,1)$};
\node[state] (S41) [below of=S31]{\scriptsize $(4,1)$};

\node[state] (S02)[above right of=S]{\scriptsize $(0,2)$};
\node[state] (S12)[above right of=S02]{\scriptsize $(1,2)$};
\node[state] (S22)[above right of=S12]{\scriptsize $(2,2)$};
\node[state] (S32)[above right of=S22]{\scriptsize $(3,2)$};
\node[state] (S42)[above right of=S32]{\scriptsize $(4,2)$};

\node[state] (S03)[above left of=S]{\scriptsize $(0,3)$};
\node[state] (S13)[above left of=S03]{\scriptsize $(1,3)$};
\node[state] (S23)[above left of=S13]{\scriptsize $(2,3)$};
\node[state] (S33)[above left of=S23]{\scriptsize $(3,3)$};
\node[state] (S43)[above left of=S33]{\scriptsize $(4,3)$};

\path (S) edge node[left]{\footnotesize $\frac13$} (S01);
\path (S) edge node[]{\footnotesize $\frac13$} (S02);
\path (S) edge node[above right]{\footnotesize $\frac13$} (S03);

\path (S01) edge [bend right] node[xshift=1.2ex, yshift=13ex]{\footnotesize $1$} (S41);
\path (S11) edge [bend right] node[yshift=9ex]{\footnotesize $1$} (S41);
\path (S21) edge [bend right] node[xshift=-1ex, yshift=5ex]{\footnotesize $1$} (S41);
\path (S31) edge node[left,xshift=0.5ex]{\footnotesize $1$} (S41);

\path (S02) edge [bend right] node[xshift=-11ex,yshift=-10ex]{\footnotesize $1$} (S42);
\path (S12) edge [bend right] node[xshift=-6ex, yshift=-7ex]{\footnotesize $1$} (S42);
\path (S22) edge [bend right] node[xshift=-3ex, yshift=-5ex]{\footnotesize $1$} (S42);
\path (S32) edge node[right, xshift=-0.5ex, yshift=-1ex]{\footnotesize $1$} (S42);

\path (S03) edge [bend right] node[xshift=9ex, yshift=-10ex]{\footnotesize $1$} (S43);
\path (S13) edge [bend right] node[xshift=7ex, yshift=-6ex]{\footnotesize $1$} (S43);
\path (S23) edge [bend right] node[xshift=5ex, yshift=-2.5ex]{\footnotesize $1$} (S43);
\path (S33) edge node[right,yshift=0.5ex]{\footnotesize $1$} (S43);

\path (S41) edge [bend right] node[right]{\footnotesize $\frac13$} (S01);
\path (S41) edge [bend right] node[right]{\footnotesize $\frac13$} (S02);
\path (S41) edge [bend left] node[left]{\footnotesize $\frac13$} (S03);

\path (S42) edge [bend right] node[left, xshift=-1ex]{\footnotesize $\frac13$} (S02);
\path (S42) edge [bend left] node{\footnotesize $\frac13$} (S01);
\path (S42) edge [bend right] node[above]{\footnotesize $\frac13$} (S03);

\path (S43) edge [bend right=30] node[left, xshift=-1ex]{\footnotesize $\frac13$} (S03);
\path (S43) edge [bend left] node{\footnotesize $\frac13$} (S02);
\path (S43) edge [bend right] node[left]{\footnotesize $\frac13$} (S01);

\end{tikzpicture}
\caption{Bus problem state transition diagram when the agent selects action $a =$ bike. Riding the bike leads directly to the last stop, at which point, it is returned to the beginning of the bus line for a new randomly selected intensity.}
\label{fig:POSMDP:busProblem:BikeAction}
\end{figure}
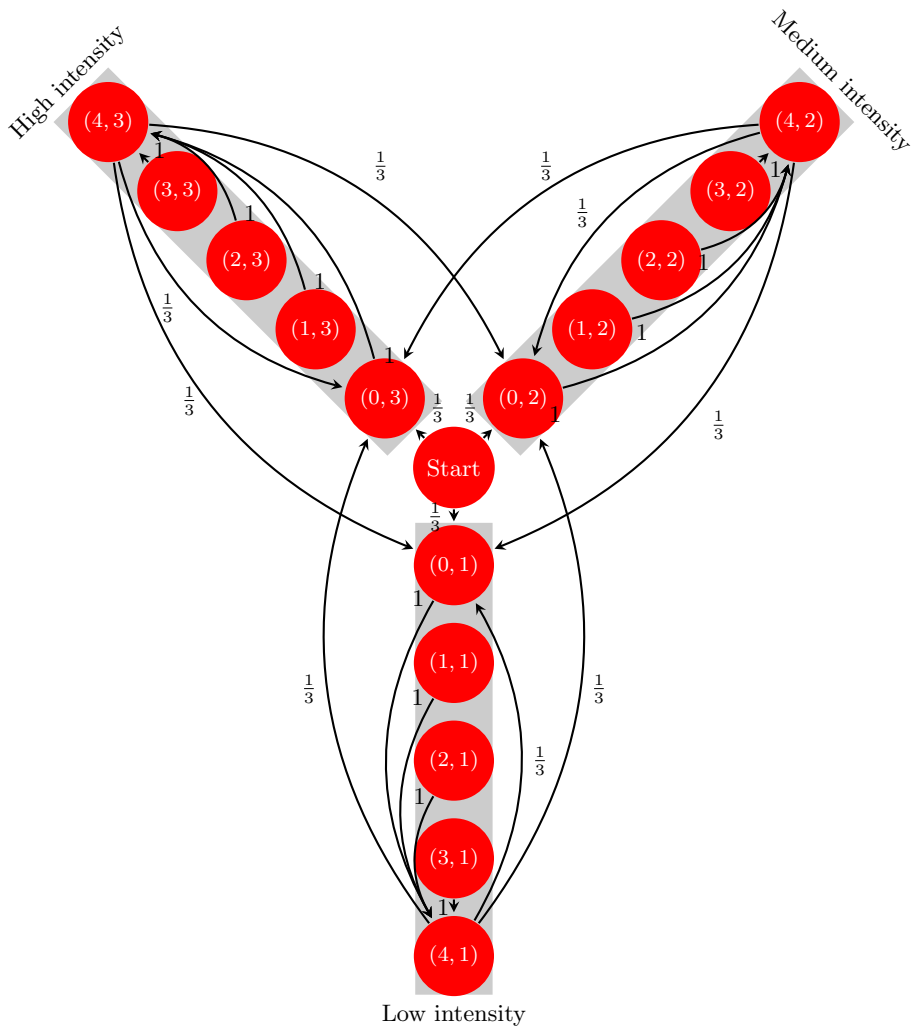

If the agent continues to ride the bus, the sojourn time distributions $f\big(\tau \mid (s,i), \text{bus}, (s', i)\big)$ follow an inverse Gaussian probability density distribution \citep{Tweedie1957}
\begin{equation} \label{eq:pdf:InverseGaussian}
f(x \mid \mu, \lambda) = \sqrt{\frac{\lambda}{2\pi x^3}}\exp\left[-\frac{\lambda(x - \mu)^2}{2\mu^2 x}\right], \qquad \forall x > 0, \qquad \mu > 0, \ \lambda > 0,
\end{equation}
where $\mu$ is the expected value and the variance is $\mu^3 / \lambda$, the parameters conditional on the bus stops and traffic intensity given by
\begin{align}
\mu\big((\textcolor{\twocolour}{\cdot}, 1),\text{bus}, (\textcolor{\threecolour}{\cdot},1)\big) &= \begin{bNiceMatrix}[first-col, first-row, code-for-first-col={\scriptstyle\color{\twocolour}}, code-for-first-row={\scriptstyle\color{\threecolour}}] & s' = 0 & s' = 1 & s' = 2 & s'= 3 & s' = 4\\
                                                                 s = 0 & 0   & 5 & 0 & 0 & 0\\
                                                                 s = 1 & 0   & 0 & 5 & 0 & 0\\
                                                                 s = 2 & 0   & 0 & 0 & 5 & 0\\
                                                                 s = 3 & 0   & 0 & 0 & 0 & 5\\
                                                                 s = 4 & 455 & 0 & 0 & 0 & 0\\
                                                         \end{bNiceMatrix} &\text{\shortstack[r]{($i = 1$)\\low intensity}}\label{eq:POSMDP:example:bus:muLow}\\
\mu\big((\textcolor{\twocolour}{\cdot}, 2),\text{bus}, (\textcolor{\threecolour}{\cdot}, 2)\big) & = \begin{bNiceMatrix}[first-col, first-row, code-for-first-col={\scriptstyle\color{\twocolour}}, code-for-first-row={\scriptstyle\color{\threecolour}}] & s' = 0 & s' = 1 & s' = 2 & s'= 3 & s' = 4\\
                                                                 s = 0 & 0   & 5 & 0  & 0  & 0 \\
                                                                 s = 1 & 0   & 0 & 10 & 0  & 0 \\
                                                                 s = 2 & 0   & 0 & 0  & 10 & 0 \\
                                                                 s = 3 & 0   & 0 & 0  & 0  & 20\\
                                                                 s = 4 & 455 & 0 & 0  & 0  & 0 \\
                                                         \end{bNiceMatrix} &\text{\shortstack[r]{($i = 2$)\\ medium intensity}}\label{eq:POSMDP:example:bus:muMedium}\\
\mu\big((\textcolor{\twocolour}{\cdot}, 3),\text{bus}, (\textcolor{\threecolour}{\cdot}, 3)\big) &= \begin{bNiceMatrix}[first-col, first-row, code-for-first-col={\scriptstyle\color{\twocolour}}, code-for-first-row={\scriptstyle\color{\threecolour}}] & s' = 0 & s' = 1 & s' = 2 & s'= 3 & s' = 4\\
                                                                 s = 0 & 0   & 10 & 0  & 0  & 0\\
                                                                 s = 1 & 0   & 0  & 25 & 0  & 0\\
                                                                 s = 2 & 0   & 0  & 0  & 25 & 0\\
                                                                 s = 3 & 0   & 0  & 0  & 0  & 45\\
                                                                 s = 4 & 455 & 0  & 0  & 0  & 0\\
                                                         \end{bNiceMatrix} &\text{\shortstack[r]{($i = 3$)\\high intensity}}\label{eq:POSMDP:example:bus:muHigh}\\
\mu\big((s, i), \text{bus}, (s', i' \neq i)\big) &= 0
\end{align}
and
\begin{equation}
\lambda\big((s,i),\text{bus},(s',i)\big) = 10 \cdot \mu\big((s,i),\text{bus},(s',i)\big)^2.
\end{equation}
The graphs of the sojourn time distributions for riding the bus from stop to stop at each traffic intensity are shown in Fig.~\ref{fig:POSMDP:busproblem:sojournTimeDistributions}. We can see that the time it takes from stop to stop in low traffic is relatively short compared to the long times between stops in high traffic; the mean total time is $20$ in low traffic whereas the mean total time in high traffic is $105$.
Note that technically we cannot have $\mu=0$ in the matrices because the inverse Gaussian distribution is defined for $\tau > 0$. This is not an issue since the corresponding probability $P(s' \mid s,a) = 0$, hence, $Q(\tau, s' \mid s, a) = 0$. 

\begin{figure}
    \centering
    \subfigure[Low traffic intensity]{
\begin{tikzpicture}[scale=0.57]
\pgfmathdeclarefunction{invGauss}{2}{%
  \pgfmathparse{sqrt(#2/(2*pi*x^3))*exp(-(#2*(x-#1)^2)/(2*#1^2*x))}%
}
\begin{axis}[every axis plot post/.append style={
  mark=none,
  samples=500,
  smooth, 
  >=stealth, 
  very thick,
  domain=0:51}, 
  axis lines = middle,
  xlabel=$x$,
  ylabel=$f(x)$,
  every axis x label/.style={
    at={(ticklabel* cs:1.0)},
    anchor=west,
},
every axis y label/.style={
    at={(ticklabel* cs:1.0)},
    anchor=south,
},
  thick,
  xmin = 0,
  xmax = 50.7,
  ymin = 0,
  ymax = 0.65] 
  \addplot[color=\twocolour] {invGauss(5,250)} node[below right] at (axis cs:5.8,0.4){\shortstack{ Stop $0 \to 1$ ($\mu = 5, \lambda = 250$)\\Stop $1 \to 2$ ($\mu = 5, \lambda = 250$)\\Stop $2 \to 3$ ($\mu = 5, \lambda = 250$)\\Stop $3 \to 4$ ($\mu = 5, \lambda = 250$)}};
\end{axis}
\end{tikzpicture}
    }
    \subfigure[Medium traffic intensity]{
\begin{tikzpicture}[scale=0.57]
\pgfmathdeclarefunction{invGauss}{2}{%
  \pgfmathparse{sqrt(#2/(2*pi*x^3))*exp(-(#2*(x-#1)^2)/(2*#1^2*x))}%
}
\begin{axis}[every axis plot post/.append style={
  mark=none,
  samples=400,
  smooth, 
  >=stealth, 
  very thick,
  domain=0:51},  
  axis lines = middle,
  xlabel=$x$,
  ylabel=$f(x)$,
  every axis x label/.style={
    at={(ticklabel* cs:1.0)},
    anchor=west,
},
every axis y label/.style={
    at={(ticklabel* cs:1.0)},
    anchor=south,
},
  thick,
  xmin = 0,
  xmax = 50.7,
  ymin = 0,
  ymax = 0.65] 
  \addplot[color=\twocolour] {invGauss(5,250)} node[above right] at (axis cs:6,0.5){Stop $0 \to 1$ ($\mu = 5, \lambda = 250$)};
  \addplot[color=\threecolour] {invGauss(10,1000)} node[above right] at (axis cs:11,0.37){\shortstack{Stop $1 \to 2$ ($\mu = 10, \lambda = 1000$) \\ Stop $2 \to 3$ ($\mu = 10, \lambda = 1000$)}};
  \addplot[color=black] {invGauss(20,4000)} node[above right] at (axis cs:11,0.28){Stop $3 \to 4$ ($\mu = 20, \lambda = 4000$)};
\end{axis}
\end{tikzpicture}
    
    }
    \subfigure[High traffic intensity]{
\begin{tikzpicture}[scale=0.57]
\pgfmathdeclarefunction{invGauss}{2}{%
  \pgfmathparse{sqrt(#2/(2*pi*x^3))*exp(-(#2*(x-#1)^2)/(2*#1^2*x))}%
}
\begin{axis}[every axis plot post/.append style={
  mark=none,
  samples=80,
  smooth, 
  >=stealth, 
  very thick,
  domain=0:51},  
  axis lines = middle,
  xlabel=$x$,
  ylabel=$f(x)$,
  every axis x label/.style={
    at={(ticklabel* cs:1.0)},
    anchor=west,
},
every axis y label/.style={
    at={(ticklabel* cs:1.0)},
    anchor=south,
},
  thick,
  xmin = 0,
  xmax = 50.7,
  ymin = 0,
  ymax = 0.55] 
  \addplot[color=\twocolour] {invGauss(10,1000)} node[above] at (axis cs:12,0.41){\shortstack{Stop $0 \to 1$\\ ($\mu = 10, \lambda = 1000$)}};
  \addplot[color=\threecolour] {invGauss(25,6250)} node[above] at (axis cs:22.5,0.26){\shortstack{Stop $1 \to 2$\\ Stop $2 \to  3$\\ ($\mu = 25, \lambda = 6250$)}};
  \addplot[color=black] {invGauss(45,20250)} node[above] at (axis cs:38.5,0.18){\shortstack{Stop $3 \to 4$\\($\mu = 45, \lambda = 20\,250$)}};
\end{axis}
\end{tikzpicture}
    }
     \caption{The sojourn time distributions for the action $a =$ bus.}
    \label{fig:POSMDP:busproblem:sojournTimeDistributions}
\end{figure}

If the agent decides to stop riding the bus and use its bicycle, then the sojourn time distribution follows a deterministic function $f\big(\tau \mid (s,i), \text{bike}, (s', i)\big)$. If $c_0$ is a constant, then the probability mass function is
\begin{equation}
    f\left(\tau \mid c_0\big((s,i), \text{bike}, (s',i)\big)\right) = \begin{cases}
           1, & \text{if } \tau = c_0\big((s,i), \text{bike}, (s',i)\big);\\
           0, &\text{otherwise},
           \end{cases}
\end{equation}
where
\begin{align}
c_0\big((\textcolor{\twocolour}{\cdot},i), \text{bike}, (\textcolor{\threecolour}{\cdot},i)\big) &= \begin{bNiceMatrix}[first-col, first-row, code-for-first-col={\scriptstyle\color{\twocolour}}, code-for-first-row={\scriptstyle\color{\threecolour}}, nullify-dots] & s' = 0 & s' = 1 & s' = 2 & s'= 3 & s' = 4\\
                                                         s = 0 & 0      & \Cdots &         & 0      & 30\\
                                                         s = 1 & \Vdots &        &         & \Vdots & 25\\
                                                         s = 2 &        &        &         &        & 20\\
                                                         s = 3 & 0      & \Cdots &         & 0      & 12\\
                                                         s = 4 & 455    & 0      & \Cdots  &        & 0
                                        \end{bNiceMatrix} \label{eq:POSMDP:example:bus:c0}
\intertext{and}
c_0\big((s,i), \text{bike}, (s',i'\neq i)\big) &= 0.
\end{align}

Since \textsc{ChronosPerseus} uses an infinite planning horizon, we need to ensure rewards from one trip (episode) from $s=0$ to $s=4$ does not the affect the value for the next trip (episode) by creating a large discount when the environment resets probablistically. This technique turns this episodic task into a non-episodic task, allowing \textsc{ChronosPerseus} to work seamlessly for episodic and non episodic tasks. We assume continuous-time discounting at a rate of $\beta = 0.02$. This means that the present value of one reward unit received at time $\tau$ units in the future equals ${\e}^{-0.02 \tau} = \gamma$. To get a discount factor of $\gamma = 0.0001$, we set $\tau = 455$ between $s = 4$ to $s = 0$; this is the entry for $\mu\big( (4,\cdot), \text{bus}, (0, \cdot) \big)$ in \eqref{eq:POSMDP:example:bus:muLow}--\eqref{eq:POSMDP:example:bus:muHigh} and $c_0\big( (4, \cdot), \text{bike}, (0, \cdot)\big)$ in \eqref{eq:POSMDP:example:bus:c0}.

The cumulative distribution function is 
\begin{equation}
F\left(\tau \mid c_0\big((s,i), \text{bike}, (s',i)\big)\right) = \begin{cases}
              0, & \text{if } \tau < c_0\big((s,i), \text{bike}, (s',i)\big);\\
              1, &\text{if } \tau \geq c_0\big((s,i), \text{bike}, (s',i)\big).
              \end{cases}
\end{equation}

The sojourn time-state transition function is defined by
\begin{align}
Q\big(\tau, (s', i) \mid (s,i), a \big) &= P\big((s', i) \mid (s,i), a\big) F\big(\tau \mid (s,i), a, (s',i) \big)\\
 &= P\big((s', i) \mid (s,i), a\big) \int_0^\tau f\big(t \mid (s,i), a, (s',i) \big)\, \mathrm{d} t
\end{align}

Given the landing state $(s', i)$, regardless of which action $a$ was selected, the agent with certainty would observe bus stop $s'$. Remember, it is the traffic intensity $i$ that is hidden from the agent, but this does not influence what bus stop $s'$ is observed; it is deterministic what bus stop is observed depending on whether the agent decides to continue to ride the bus or take the bicycle. This leads to the observation probability matrix given by
\begin{equation}
G\big(\textcolor{\threecolour}{\cdot} \mid a, (\textcolor{\twocolour}{\cdot},i)\big) = \begin{bNiceMatrix}[first-col, first-row, code-for-first-col={\scriptstyle\color{\twocolour}}, code-for-first-row={\scriptstyle\color{\threecolour}}, nullify-dots] & o = 0 & o = 1 & o = 2 & o = 3 & o = 4\\
                                                   s' = 0 & 1       & 0      & \Cdots &   & 0     \\
                                                   s' = 1 & 0       & \Ddots & \Ddots &   & \Vdots\\
                                                   s' = 2 & \Vdots  & \Ddots &        &   &       \\
                                                   s' = 3 &         &        &        &   & 0     \\
                                                   s' = 4 & 0       & \Cdots &        & 0 & 1\\
                          \end{bNiceMatrix} \qquad \forall i \in \{1, 2, 3\}.
\end{equation}

Now, we will setup the reward function. Set the lump sum reward $r_1\big((s,i),a\big) = 0$ and continuous reward rate as $r_2\big((s,i),a,(s',i')\big) = -1$. Using the Laplace transform of the inverse Gaussian distribution 
\begin{equation}
    \int_0^\infty {\e}^{-\beta \tau} f\big(\tau \mid (s,i), a, (s',i')\big) \, \d \tau,
\end{equation}
the reward function is 
\begin{equation}
R\big((s,i),a\big) = -\frac{1}{\beta}\sum_{(s',\, i') \in \mathcal{S}} P\big((s', i') \mid (s,i), a\big) \Big(1 - M\big((s,i),a,(s',i')\big)\Big)
\end{equation}
where
\begin{equation}
M\big((s,i),a,(s',i')\big) = \begin{cases}
         \exp\left[-\beta c_0\big((s,i), \text{bike}, (s',i)\big)\right], & \text{if $a =$ bike};\\
	     \exp \left[ \frac{\lambda(s, \text{bus}, s')}{\mu(s, \text{bus}, s')} \left(1 - \sqrt{1 + \frac{2 \mu(s, \text{bus}, s')^2 \beta}{\lambda(s, \text{bus}, s')}}\right)\right], & \text{if $a =$ bus}.
   \end{cases}
\end{equation}

The agent begins at bus stop $0$, and since this is fully observable, but the traffic level intensity is not, the agent assumes that the three traffic intensities are equally likely; that is,
\begin{equation}
\xi_0\big((\textcolor{\twocolour}{\cdot}, i)\big) = \begin{bNiceMatrix}[first-col, code-for-first-col={\scriptstyle\color{\twocolour}}] s = 0 & \frac13 \\ s = 1 & 0 \\ s = 2 & 0 \\ s = 3 & 0 \\ s = 4 & 0  \end{bNiceMatrix} \qquad \forall i \in \{1, 2, 3\}.
\end{equation}

The lump sum reward is 
\begin{equation}
    r_1\big( (4, i), a \big) = \begin{cases}
              100, & \forall i \in \{1, 2, 3\}, a \in \mathcal{A};\\
              0,   & \text{otherwise,}
              \end{cases}
\end{equation}
and the continuous reward rate is 
\begin{equation}
    r_2(\cdot, \cdot, \cdot) = 0.
\end{equation}

\noindent \textbf{Results.} The resulting policy is shown on Fig.~\ref{fig:POSMDP:example:bus:optimalpolicy}. Fig.~\ref{fig:POSMDP:example:bus:optimalpolicy:stop0} shows that if the agent had enough evidence to believe the traffic was medium or high, it should ride its bike (expected time 30) rather than staying on the bus (expected time of 45 and 105 respectively). But in lack of any information initially, the best option is to stay on the bus for one stop (for 5 time units) and then decide (the exact center of the simplex would be a red dot for the bus action). After one stop (Fig.~\ref{fig:POSMDP:example:bus:optimalpolicy:stop1}, medium and high traffic already have significant differences in remaining expected time, making the bike a better option only in the high traffic condition. The expected remaining time are 25 for the bike, and 15, 40 and 95 for the low, medium, and high traffic respectively. But the bike action must be compared against the  averaged of the 3 traffic intensity's expected discounting weighted by the agent's belief. In Fig.~\ref{fig:POSMDP:example:bus:optimalpolicy:stop2} and Fig.~\ref{fig:POSMDP:example:bus:optimalpolicy:stop3} similarly shows the optimal action at stops 2 and 3 given the current belief about the traffic intensities. Note that because we are looking at belief-weighted exponential discounting over stochastic inverse-Gaussian distributed sojourn time, the action decision boundary may not look as it would if we instead assumed a negative reward per time unit and no discounting.

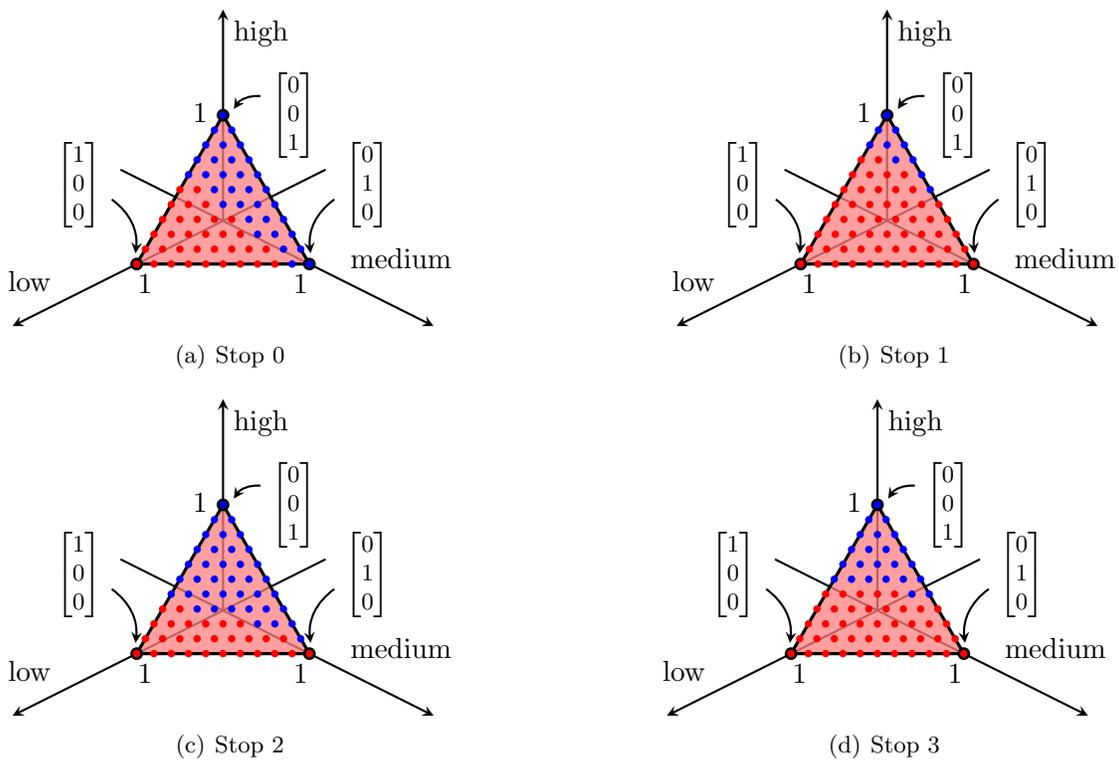
\begin{figure}
\subfigure[Stop 0]{
\begin{tikzpicture}[>=stealth]
\begin{axis}[width=0.65*\textwidth,
                axis lines=middle,
                axis equal,
                xlabel=low,
                ylabel=medium,
                zlabel=high,
                xmin=0,
                xmax=1.25,
                ymin=0,
                ymax=1.25,
                zmin=0,
                zmax=2,
                xtick={0,1},
                ytick={0,1},
                ztick={0,1},
                view={135}{30}, thick]
\addplot3[patch, color=\twocolour!50, fill opacity=0.75, faceted color=black, very thick] coordinates{(1,0,0) (0,1,0) (0,0,1)};
\addplot3[mark=*] coordinates {(1,0,0)} node (s1') {};
\addplot3[mark=*] coordinates {(0,1,0)} node (s2') {};
\addplot3[mark=*] coordinates {(0,0,1)} node (s3') {};
\addplot3[mark=*, color = \twocolour, mark size = 1pt, only marks] table [x=i1, y=i2, z=i3, col sep=comma] {PolicyStop0Action0.csv};
\addplot3[mark=*, color = \threecolour, mark size = 1pt, only marks] table [x=i1, y=i2, z=i3, col sep=comma] {PolicyStop0Action1.csv};
\node[above left, xshift=-2ex, yshift=2ex] (s1) at (1,0,0) {\footnotesize $\begin{bmatrix}1\\0\\0\end{bmatrix}$};
\node[above right, xshift=2ex, yshift=2ex] (s2) at (0,1,0) {\footnotesize $\begin{bmatrix}0\\1\\0\end{bmatrix}$};
\node[right, xshift=3ex] (s3) at (0,0,1) {\footnotesize $\begin{bmatrix}0\\0\\1\end{bmatrix}$};
\path[->] (s1) edge[bend left] (s1');
\path[->] (s2) edge[bend right] (s2');
\path[->] (s3) edge[bend right] (s3');
\end{axis}
\end{tikzpicture}
\label{fig:POSMDP:example:bus:optimalpolicy:stop0}
}
\subfigure[Stop 1]{
\begin{tikzpicture}[>=stealth]
\begin{axis}[width=0.65*\textwidth,
                axis lines=middle,
                axis equal,
                xlabel=low,
                ylabel=medium,
                zlabel=high,
                xmin=0,
                xmax=1.25,
                ymin=0,
                ymax=1.25,
                zmin=0,
                zmax=2,
                xtick={0,1},
                ytick={0,1},
                ztick={0,1},
                view={135}{30}, thick]
\addplot3[patch, color=\twocolour!50, fill opacity=0.75, faceted color=black, very thick] coordinates{(1,0,0) (0,1,0) (0,0,1)};
\addplot3[mark=*] coordinates {(1,0,0)} node (s1') {};
\addplot3[mark=*] coordinates {(0,1,0)} node (s2') {};
\addplot3[mark=*] coordinates {(0,0,1)} node (s3') {};
\addplot3[mark=*, color = \twocolour, mark size = 1pt, only marks] table [x=i1, y=i2, z=i3, col sep=comma] {PolicyStop1Action0.csv};
\addplot3[mark=*, color = \threecolour, mark size = 1pt, only marks] table [x=i1, y=i2, z=i3, col sep=comma] {PolicyStop1Action1.csv};
\node[above left, xshift=-2ex, yshift=2ex] (s1) at (1,0,0) {\footnotesize $\begin{bmatrix}1\\0\\0\end{bmatrix}$};
\node[above right, xshift=2ex, yshift=2ex] (s2) at (0,1,0) {\footnotesize $\begin{bmatrix}0\\1\\0\end{bmatrix}$};
\node[right, xshift=3ex] (s3) at (0,0,1) {\footnotesize $\begin{bmatrix}0\\0\\1\end{bmatrix}$};
\path[->] (s1) edge[bend left] (s1');
\path[->] (s2) edge[bend right] (s2');
\path[->] (s3) edge[bend right] (s3');
\end{axis}
\end{tikzpicture}
\label{fig:POSMDP:example:bus:optimalpolicy:stop1}
}
\subfigure[Stop 2]{
\begin{tikzpicture}[>=stealth]
\begin{axis}[width=0.65*\textwidth,
                axis lines=middle,
                axis equal,
                xlabel=low,
                ylabel=medium,
                zlabel=high,
                xmin=0,
                xmax=1.25,
                ymin=0,
                ymax=1.25,
                zmin=0,
                zmax=2,
                xtick={0,1},
                ytick={0,1},
                ztick={0,1},
                view={135}{30}, thick]
\addplot3[patch, color=\twocolour!50, fill opacity=0.75, faceted color=black, very thick] coordinates{(1,0,0) (0,1,0) (0,0,1)};
\addplot3[mark=*] coordinates {(1,0,0)} node (s1') {};
\addplot3[mark=*] coordinates {(0,1,0)} node (s2') {};
\addplot3[mark=*] coordinates {(0,0,1)} node (s3') {};
\addplot3[mark=*, color = \twocolour, mark size = 1pt, only marks] table [x=i1, y=i2, z=i3, col sep=comma] {PolicyStop2Action0.csv};
\addplot3[mark=*, color = \threecolour, mark size = 1pt, only marks] table [x=i1, y=i2, z=i3, col sep=comma] {PolicyStop2Action1.csv};
\node[above left, xshift=-2ex, yshift=2ex] (s1) at (1,0,0) {\footnotesize $\begin{bmatrix}1\\0\\0\end{bmatrix}$};
\node[above right, xshift=2ex, yshift=2ex] (s2) at (0,1,0) {\footnotesize $\begin{bmatrix}0\\1\\0\end{bmatrix}$};
\node[right, xshift=3ex] (s3) at (0,0,1) {\footnotesize $\begin{bmatrix}0\\0\\1\end{bmatrix}$};
\path[->] (s1) edge[bend left] (s1');
\path[->] (s2) edge[bend right] (s2');
\path[->] (s3) edge[bend right] (s3');
\end{axis}
\end{tikzpicture}
\label{fig:POSMDP:example:bus:optimalpolicy:stop2}
} \hfill
\subfigure[Stop 3]{
\begin{tikzpicture}[>=stealth]
\begin{axis}[width=0.65*\textwidth,
                axis lines=middle,
                axis equal,
                xlabel=low,
                ylabel=medium,
                zlabel=high,
                xmin=0,
                xmax=1.25,
                ymin=0,
                ymax=1.25,
                zmin=0,
                zmax=2,
                xtick={0,1},
                ytick={0,1},
                ztick={0,1},
                view={135}{30}, thick]
\addplot3[patch, color=\twocolour!50, fill opacity=0.75, faceted color=black, very thick] coordinates{(1,0,0) (0,1,0) (0,0,1)};
\addplot3[mark=*] coordinates {(1,0,0)} node (s1') {};
\addplot3[mark=*] coordinates {(0,1,0)} node (s2') {};
\addplot3[mark=*] coordinates {(0,0,1)} node (s3') {};
\addplot3[mark=*, color = \twocolour, mark size = 1pt, only marks] table [x=i1, y=i2, z=i3, col sep=comma] {PolicyStop3Action0.csv};
\addplot3[mark=*, color = \threecolour, mark size = 1pt, only marks] table [x=i1, y=i2, z=i3, col sep=comma] {PolicyStop3Action1.csv};
\node[above left, xshift=-2ex, yshift=2ex] (s1) at (1,0,0) {\footnotesize $\begin{bmatrix}1\\0\\0\end{bmatrix}$};
\node[above right, xshift=2ex, yshift=2ex] (s2) at (0,1,0) {\footnotesize $\begin{bmatrix}0\\1\\0\end{bmatrix}$};
\node[right, xshift=3ex] (s3) at (0,0,1) {\footnotesize $\begin{bmatrix}0\\0\\1\end{bmatrix}$};
\path[->] (s1) edge[bend left] (s1');
\path[->] (s2) edge[bend right] (s2');
\path[->] (s3) edge[bend right] (s3');
\end{axis}
\end{tikzpicture}
\label{fig:POSMDP:example:bus:optimalpolicy:stop3}
}
\caption{The optimal policy for the bus problem at each bus stop with a regular mesh of belief states in which intensity the agent believes. A red dot \textcolor{\twocolour}{$\bullet$} represents a belief whose optimal action is to continue riding the bus, and a blue dot \textcolor{\threecolour}{$\bullet$} represents a belief whose optimal action is to stop riding the bus and take the bicycle.}
\label{fig:POSMDP:example:bus:optimalpolicy}
\end{figure}

In this problem, we showed the ability of \textsc{ChronosPerseus} to quickly solve POMDPs that involve time without extending the problem state space to represent elapsed time.  This would have required the state space with an elapsed time dimension of at least $5\times 3 \times T$ where $T \geq 125$ for 1 state per time unit.  Instead, we naturally place the various durations into sojourn time distributions, and allowed the belief to be updated directly based on the likelihood of the distribution given the observed transitions.  Using \textsc{ChronosPerseus}, we see that POSMDPs are not more complicated to solve than their POMDP counterparts.  Instead, like using SMDPs in options, it allows for the compression of the temporal properties of the problem.  This simplifies not only the creation of the model, but also its evaluation by avoiding the explosion of the state space.  This approach also allows us to easily mix various temporal distributions and solve episodic problems as efficiently as non-episodic problems.

\subsection{A Machine Maintenance Problem}

\citet{Zhang2017} studied an industrial application of POSMDPs to maintain water (rapid gravity) filters. Filters are essential components in municipal drinking water treatment plants, where the filters can be classified into four states
\begin{equation*}
\mathcal{S} = \{ 1 = \text{good}, 2 = \text{acceptable}, 3 = \text{poor}, 4 = \text{awful}\}.
\end{equation*}
Unlike the bus problem in Section~\ref{sec:TheBusProblem}, the state of the filter is (fully) hidden and it must be inferred with uncertain observations. To extend the longevity and reliability of the filters, there are four maintenance actions, which are
\begin{equation*}
\mathcal{A} = \{ 1 = \text{do nothing}, 2 = \text{backwash}, 3 = \text{dose chemicals}, 4 = \text{replace}\}.
\end{equation*}
An observation to infer the state of the filter is to measure turbidity. Turbidity is the amount of cloudiness or haziness of a fluid caused by suspended particles: this is measured by a nephelometer, which shines a light beam through the water sample and measures how much light is reflected into a detector (often located at $90^\circ$ from the source beam). The turbidity of the incoming water into the filter and the turbidity of the outgoing water from the filter are measured. If the outgoing to incoming water turbidity ratio is close to zero, then the filter is in a good state. However, if the ratio is closer to one, the filter is likely to be in a poor or awful state. Hence, the observation space is given by the continuous interval $\mathcal{O} = [0,1]$.

The probability transition matrices are
\begin{align*}
P(\textcolor{\threecolour}{\cdot} \mid \textcolor{\twocolour}{\cdot}, 1) = P(\textcolor{\threecolour}{\cdot} \mid \textcolor{\twocolour}{\cdot}, 2) &= \begin{bNiceMatrix}[first-col, first-row, code-for-first-col={\scriptstyle\color{\twocolour}}, code-for-first-row={\scriptstyle\color{\threecolour}}] & s' = 1 & s' = 2 & s' = 3 & s'= 4\\
s=1 & 0.1043 & 0.7413 & 0.1493 & 0.0051\\
s=2 & 0      & 0.1043 & 0.7413 & 0.1544\\
s=3 & 0      & 0      & 0.1043 & 0.8957\\
s=4 & 0      & 0      & 0      & 1
\end{bNiceMatrix},\\ %
P(\textcolor{\threecolour}{\cdot} \mid \textcolor{\twocolour}{\cdot}, 3) &= \begin{bNiceMatrix}[first-col, first-row, code-for-first-col={\scriptstyle\color{\twocolour}}, code-for-first-row={\scriptstyle\color{\threecolour}}] & s' = 1 & s' = 2 & s' = 3 & s'= 4\\
s=1 & 1      & 0      & 0      & 0     \\
s=2 & 0.50   & 0.50   & 0      & 0     \\
s=3 & 0.25   & 0.70   & 0.05   & 0     \\
s=4 & 0.20   & 0.55   & 0.20   & 0.05
\end{bNiceMatrix},
\intertext{and}
P(\textcolor{\threecolour}{\cdot} \mid \textcolor{\twocolour}{\cdot}, 4) &= \begin{bNiceMatrix}[first-col, first-row, code-for-first-col={\scriptstyle\color{\twocolour}}, code-for-first-row={\scriptstyle\color{\threecolour}}] & s' = 1 & s' = 2 & s' = 3 & s'= 4\\
s=1 & 1      & 0      & 0      & 0     \\
s=2 & 1      & 0      & 0      & 0     \\
s=3 & 1      & 0      & 0      & 0     \\
s=4 & 1      & 0      & 0      & 0   
\end{bNiceMatrix}.
\end{align*}

The sojourn times for actions 1, 2, and 3 are fixed, whereas $\tau(s' \mid s, 1) = 78.7433$, $\tau(s' \mid s, 2) = 85.3052$, and $\tau(s' \mid s, 3) = 3$. The sojourn time for action 4 follows a truncated Gaussian distribution with mean $\mu = 10$, standard deviation $\sigma = 1.5$, and $\tau(s' \mid s, 4) > 0$; in other words,
\begin{equation}
f(\tau \mid s, a = 4, s') = \begin{cases}
 \dfrac{1}{1.5}\cdot \dfrac{\varphi\left(\dfrac{\tau-10}{1.5}\right)}{1 - \Phi\left(-\dfrac{10}{1.5}\right)}, & \text{if } \tau > 0;\\
 0, & \text{otherwise,}
\end{cases}
\end{equation}
where
\begin{equation}
\varphi(x) = \frac{1}{\sqrt{2\pi}} \exp\left(-\frac12 x^2\right)
\end{equation}
is the standard normal probability density function, and $\Phi$ is its corresponding cumulative distribution function.

To deal with continuous observations, we discretize the interval $[0,1]$ into $j$ evenly spaced numbers, which yields a finite set of observations $O$ ($|O| = j$). The observation transition probability is 
\begin{equation*}
G(o \mid a, s') = G(o \mid a) = \frac{f_B\big(o \mid \varphi(a), \eta(a)\big)}{\displaystyle\sum_{o' \in O} f_B\big(o' \mid \varphi(a), \eta(a)\big)}
\end{equation*}
where 
\begin{equation*}
    f_B\big(o \mid \varphi(a), \eta(a)\big) = \frac{\Gamma(\varphi + \eta)}{\Gamma(\varphi)\Gamma(\eta)} o^{\varphi - 1}(1 - o)^{\eta - 1}, \qquad \Gamma(z) = \int_0^\infty x^{z-1}{\e}^{-x}\, \d x, \ \Re(z) > 0
\end{equation*}
is the beta probability density function and the parameters 
\begin{equation*}
\varphi(\textcolor{\twocolour}{\cdot}) = \begin{bNiceMatrix}[first-col, code-for-first-col={\scriptstyle\color{\twocolour}}] a = 1 & 2 \\ a = 2 & 6 \\ a = 3 & 18 \\ a = 4 & 18 \\  \end{bNiceMatrix} \qquad \text{and} \qquad \eta(\textcolor{\twocolour}{\cdot}) = \begin{bNiceMatrix}[first-col, code-for-first-col={\scriptstyle\color{\twocolour}}] a = 1 & 18 \\ a = 2 & 18 \\ a = 3 & 18 \\ a = 4 & 6 \\  \end{bNiceMatrix}.
\end{equation*}
The beta probability density functions are shown in Fig.~\ref{fig:POSMDP:ex:Zhang:betapdfs}.
\begin{figure}[h!t]
\centering
\begin{tikzpicture}[
  scale = 0.65,
  declare function={
    gamma(\z) =
    (2.506628274631*sqrt(1/\z)+0.20888568*(1/\z)^(1.5)+
    0.00870357*(1/\z)^(2.5)-(174.2106599*(1/\z)^(3.5))/25920-
    (715.6423511*(1/\z)^(4.5))/1244160)*exp((-ln(1/\z)-1)*\z);
  },
  declare function={
    beta(\a,\b) = gamma(\a)*gamma(\b)/gamma(\a+\b);
  },
  declare function={
    betapdf(\x,\a,\b) = \x^(\a-1)*(1-\x)^(\b-1)/beta(\a,\b);
  }]

  \begin{axis}[every axis plot post/.append style={
  mark=none,
  samples=100,
  smooth, 
  >=stealth, 
  very thick,
  domain=0:1}, 
  axis lines = middle,
  xlabel=$x$,
  ylabel=$f_B(o)$,
  every axis x label/.style={
    at={(ticklabel* cs:1.0)},
    anchor=west,
},
every axis y label/.style={
    at={(ticklabel* cs:1.0)},
    anchor=south,
},
  thick,
  xmin = 0,
  xmax = 1.05,
  ymin = 0,
  ymax = 8.6,
  x post scale=2]
    \addplot[smooth, \twocolour] {betapdf(x,2,18)} node[above right] at (axis cs:0.0556,7.19){$a = 1$};
    \addplot[smooth, \twocolour!50] {betapdf(x,6,18)} node[above] at (axis cs:0.23,4.58){$a = 2$};
    \addplot[smooth, \twocolour!80] {betapdf(x,18,18)} node[above] at (axis cs:0.5,4.75){$a = 3$};
    \addplot[smooth, \twocolour] {betapdf(x,18,6)} node[above] at (axis cs:0.77,4.58){$a = 4$};
  \end{axis}
\end{tikzpicture}
\caption{The beta probability density functions used for the observation transition probabilities in the machine maintenance problem.}
\label{fig:POSMDP:ex:Zhang:betapdfs}
\end{figure}
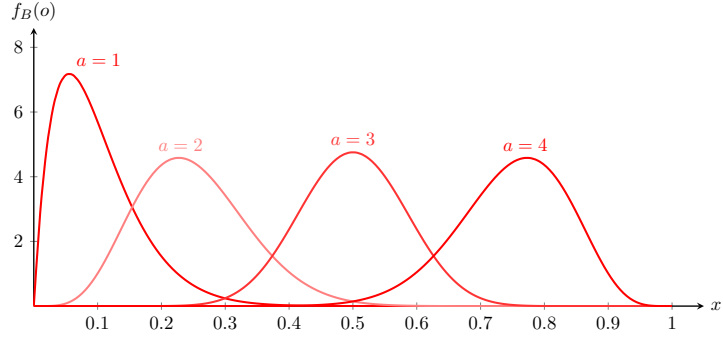

The lump sum reward is 
\begin{equation*}
r_1(s, \textcolor{\twocolour}{\cdot}) = \begin{bNiceMatrix}[first-col, code-for-first-col={\scriptstyle\color{\twocolour}}] a = 1 & 0 \\ a = 2 & -100 \\ a = 3 & -200 \\ a = 4 & -500 \\  \end{bNiceMatrix} \qquad \forall s \in \mathcal{S},
\end{equation*}
and the continuous reward rate is
\begin{equation*}
r_2(\textcolor{\twocolour}{\cdot}, \textcolor{\threecolour}{\cdot}, s') = \begin{bNiceMatrix}[first-col, first-row, code-for-first-col={\scriptstyle\color{\twocolour}}, code-for-first-row={\scriptstyle\color{\threecolour}}] & a = 1 & a = 2 & a = 3 & a = 4\\
s=1 & 500    & 500    & -100   & -100  \\
s=2 & 250    & 250    & -100   & -100  \\
s=3 & -300   & -300   & -100   & -100  \\
s=4 & -500   & -500   & -100   & -100
\end{bNiceMatrix} \qquad \forall s' \in \mathcal{S}.
\end{equation*}

The initial $\alpha$-vector is
\begin{equation}
\alpha(\textcolor{\twocolour}{\cdot}) = \begin{bNiceMatrix}[first-col, code-for-first-col={\scriptstyle\color{\twocolour}}] s = 1 & -10^6 \\ s = 2 & -10^6 \\ s = 3 & -10^6 \\ s = 4 & -10^6 \\  \end{bNiceMatrix}.
\end{equation}

The initial belief $\xi_0$ is that the filter is good; that is, 
\begin{equation}
\xi_0(\textcolor{\twocolour}{\cdot}) = \begin{bNiceMatrix}[first-col, code-for-first-col={\scriptstyle\color{\twocolour}}] s = 1 & 1 \\ s = 2 & 0 \\ s = 3 & 0 \\ s = 4 & 0\\  \end{bNiceMatrix}.
\end{equation}

\noindent \textbf{Results.} Running \textsc{ChronosPerseus} on the problem with discount rate $\beta = 0.01$ generates the set
{\small
\begin{align*}
V &= \left\{\begin{bNiceMatrix}[code-for-first-row={\scriptstyle \color{\threecolour}}, first-row, code-for-first-col={\scriptstyle \color{\twocolour}},  first-col] & a = \text{backwash}\\ V(\text{good}) & 46351.3242\\ V(\text{acceptable}) & 31551.5723 \\ V(\text{poor}) & -71.0176 \\ V(\text{awful}) & -11550.9883 \end{bNiceMatrix}, \begin{bNiceMatrix}[code-for-first-row={\scriptstyle \color{\threecolour}}, first-row] a = \text{dose} \\ 43257.2109\\ 41467.2891\\ 40529.1953\\ 40172.0430\end{bNiceMatrix}, \begin{bNiceMatrix}[code-for-first-row={\scriptstyle \color{\threecolour}}, first-row] a = \text{dose} \\ 43404.0859\\ 41505.2148\\ 40516.1875\\ 40159.6211\end{bNiceMatrix}, \begin{bNiceMatrix}[code-for-first-row={\scriptstyle \color{\threecolour}}, first-row] a = \text{dose} \\ 43690.0625\\ 41519.4531\\ 40405.4766\\ 40057.0312 \end{bNiceMatrix}, 
\begin{bNiceMatrix}[code-for-first-row={\scriptstyle \color{\threecolour}}, first-row] a = \text{dose} \\ 43987.1328\\ 41352.2109\\ 40036.6328\\ 39764.9141 \end{bNiceMatrix},\right.\\
&\qquad \begin{bNiceMatrix}[code-for-first-row={\scriptstyle \color{\threecolour}}, first-row, code-for-first-col={\scriptstyle \color{\twocolour}},  first-col] & a = \text{dose}\\ V(\text{good}) & 43567.4844\\ V(\text{acceptable}) & 41526.0000 \\ V(\text{poor}) & 40470.0547 \\ V(\text{awful}) & 40112.9766 \end{bNiceMatrix}, \begin{bNiceMatrix}[code-for-first-row={\scriptstyle \color{\threecolour}}, first-row] a = \text{dose} \\ 44235.8906\\ 40804.3320\\ 39155.7461\\ 39067.2930\end{bNiceMatrix}, \begin{bNiceMatrix}[code-for-first-row={\scriptstyle \color{\threecolour}}, first-row] a = \text{dose} \\ 43890.7227\\ 41438.7500\\ 40201.9414\\ 39896.9844\end{bNiceMatrix}, \begin{bNiceMatrix}[code-for-first-row={\scriptstyle \color{\threecolour}}, first-row] a = \text{dose} \\ 43800.3984\\ 41489.0391\\ 40313.0742\\ 39984.2852 \end{bNiceMatrix}, 
\begin{bNiceMatrix}[code-for-first-row={\scriptstyle \color{\threecolour}}, first-row] a = \text{dose} \\ 44187.7969\\ 40967.9141\\ 39406.7695\\  39265.6484 \end{bNiceMatrix},\\
&\qquad \qquad \qquad \left. \begin{bNiceMatrix}[code-for-first-row={\scriptstyle \color{\threecolour}}, first-row, code-for-first-col={\scriptstyle \color{\twocolour}}, first-col] & a = \text{dose} \\ V(\text{good}) & 44127.5703\\ V(\text{acceptable}) & 41123.7617\\ V(\text{poor}) & 39652.8359\\ V(\text{awful}) & 39461.3516\end{bNiceMatrix}, \begin{bNiceMatrix}[code-for-first-row={\scriptstyle \color{\threecolour}}, first-row] a = \text{dose} \\ 44061.6992\\  41249.8320\\  39859.4453\\ 39625.1172\end{bNiceMatrix}, \begin{bNiceMatrix}[code-for-first-row={\scriptstyle \color{\threecolour}}, first-row] a = \text{replace} \\ 40504.4414\\ 40504.4414\\ 40504.4414\\ 40504.4414\end{bNiceMatrix}\right\},
\end{align*}}
where $V$ contains 13 $\alpha$-vectors with their corresponding actions labeled at the top.
With $|B| = 5000$ sampled belief points and 40 iterations, it took around 16 hours for \citet{Zhang2017} (Intel Core i5-4590 CPU @ 3.30 GHz) to complete their calculations, while \textsc{ChronosPerseus} took less than 40 seconds (Intel Core Xeon Silver 4210 CPU @ 2.20 Ghz, Nvidia GeForce RTX 2080 Super 8 GB). Their paper did not state the number of bins of observations, so we selected 100 for our simulation. 

Table~\ref{tab:POSMDP:ex:Zhang:beliefwithoptimalaction} lists some belief points with their corresponding optimal value and optimal action from \textsc{ChronosPerseus} alongside the results of \citet{Zhang2017}. The beliefs in Table~\ref{tab:POSMDP:ex:Zhang:beliefwithoptimalaction} were selected by \citeauthor{Zhang2017}, and these values were included in the sampled belief set $B$ of \textsc{ChronosPerseus} as well. At the top of the column, these beliefs correspond to a filter believed to be in good condition and descend to acceptable, poor, and finally awful. We notice that the \textsc{ChronosPerseus} has selected to backwash for a filter believed to be in good condition compared to \citeauthor{Zhang2017} whose action is do-nothing. The accumulated reward for backwash is more than the accumulated reward for doing nothing: $R(1,1) = 27249.43$ while $R(1,2) = 28594.38$. There is more time during the backwash, which allows accumulation of continuous reward overcoming the initial lump negative reward of $-100$ from backwash compared to the time allowed for the do-nothing action. The accumulated reward has to compensate for the risk of getting bad water (such as ending up in the poor or awful state). Hence, we believe that \textsc{ChronosPerseus} has selected the correct action.

\begin{table}[h!t]
\caption{Sample of belief points, their corresponding optimal values and actions}\label{tab:POSMDP:ex:Zhang:beliefwithoptimalaction}
{\footnotesize
    \begin{tabular}{@{}lcccc@{}}
    \toprule
		             & \multicolumn{2}{c}{\textsc{ChronosPerseus}} & \multicolumn{2}{c}{\citet{Zhang2017}} \\
								\cline{2-3} \cline{4-5}
    Belief state, $\xi$ & Optimal Value & Optimal Action & Optimal Value & Optimal Action\\
    \midrule
    $\begin{bmatrix}0.9972 & 0.0028 & 0.0000 & 0.0000 \end{bmatrix}^\top$ & 46309.8867 & 2 & 46316.40 & 1 \\
    $\begin{bmatrix}0.9965 & 0.0035 & 0.0000 & 0.0000 \end{bmatrix}^\top$ & 46299.5234 & 2 & 46306.04 & 1 \\
    $\begin{bmatrix}0.8714 & 0.1286 & 0.0000 & 0.0000 \end{bmatrix}^\top$ & 44448.0742 & 2 & 44454.43 & 2 \\
    $\begin{bmatrix}0.8160 & 0.1840 & 0.0000 & 0.0000 \end{bmatrix}^\top$ & 43628.1680 & 2 & 43634.51 & 2 \\
    $\begin{bmatrix}0.0031 & 0.6803 & 0.3165 & 0.0001 \end{bmatrix}^\top$ & 41197.9805 & 3 & 41215.31 & 3 \\
    $\begin{bmatrix}0.0001 & 0.0390 & 0.9457 & 0.0152 \end{bmatrix}^\top$ & 40560.6250 & 3 & 40574.81 & 3 \\
    $\begin{bmatrix}0.0000 & 0.0003 & 0.8488 & 0.1509 \end{bmatrix}^\top$ & 40504.4453 & 4 & 40498.43 & 4 \\
    $\begin{bmatrix}0.0000 & 0.0000 & 0.0000 & 1.0000 \end{bmatrix}^\top$ & 40504.4414 & 4 & 40385.84 & 4 \\
    \bottomrule
    \end{tabular}%
}

\end{table}

In this section, we showed that \textsc{ChronosPerseus} could solve real-world POSMDP problems with continuous observation spaces and that importance sampling combined with GPU implementation provides few orders of magnitude speed up the computations allowing for a higher number of iterations and thus a longer horizon. 

\section{Summary and Conclusion}
The partially observable semi-Markov decision process (POSMDP) model has been used for decades in the operations research community to provide planning under uncertainty. While it has been used for tasks such as maintenance scheduling, it can be equally valuable in the artificial intelligence community by modeling not only the interaction between an agent and its environment but also dealing with the issue of time---that is, how long it takes for the agent to transition from one hidden state to another. Moreover, solving a POSMDP rather than an equivalent time-extended state-space POMDP using importance sampling is faster by avoiding the huge state-space explosion that adding time-steps would require, in a way similar to the use of SMDPs and options in reinforcement learning.

In this paper, we presented a new POSMDP solver---\textsc{ChronosPerseus}---combining \textsc{Perseus} and importance sampling to solve a POSMDP where the transition time is observable. We showed that the solver works on episodic and non-episodic problems, with mixed-observably, discrete or continuous observation space, and a mixture of fixed and stochastic continuous sojourn times. Furthermore, we demonstrate how it could learn a policy on a POMDP where the time is the only available information to resolve the partially observable state, as this may happen in many real-world problems. 

% Acknowledgements should go at the end, before appendices and references

\acks{This research has been supported by CD-ARP research grants from Canadian Defence Academy and the Royal Military College of Canada to Dr.~F.~Rivest.   Icons in Figure~\ref{fig:example:BusProblem} are provided by \href{https://thenounproject.com}{The Noun Project}: \textit{Bus} by Rainbow Designs, \textit{Bus Stop Square} by Rihards Gromuls, \textit{man and bicycle} by Gan Khoon Lay, and \textit{clock} and \textit{Time} by kiddo.}

\vskip 0.2in
\bibliography{main.bib}

\begin{thebibliography}{30}
\providecommand{\natexlab}[1]{#1}
\providecommand{\url}[1]{\texttt{#1}}
\expandafter\ifx\csname urlstyle\endcsname\relax
  \providecommand{\doi}[1]{doi: #1}\else
  \providecommand{\doi}{doi: \begingroup \urlstyle{rm}\Url}\fi

\bibitem[\AA{}str\"{o}m(1965)]{Astrom1965}
K.~J. \AA{}str\"{o}m.
\newblock Optimal control of {M}arkov processes with incomplete state
  information.
\newblock \emph{Journal of Mathematical Analysis and Applications}, 10\penalty0
  (1):\penalty0 174--205, February 1965.
\newblock \doi{10.1016/0022-247X(65)90154-X}.

\bibitem[Aoki(1965)]{Aoki1965}
Masanao Aoki.
\newblock Optimal control of partially observable {M}arkovian systems.
\newblock \emph{Journal of the Franklin Institute}, 280\penalty0 (5):\penalty0
  367--386, November 1965.
\newblock \doi{10.1016/0016-0032(65)90528-4}.

\bibitem[Bellman(1957{\natexlab{a}})]{Bellman1957}
Richard Bellman.
\newblock \emph{Dynamic Programming}.
\newblock Princeton University Press, 1957{\natexlab{a}}.

\bibitem[Bellman(1957{\natexlab{b}})]{Bellman1957a}
Richard Bellman.
\newblock A {M}arkovian decision process.
\newblock \emph{Indiana University Mathematics Journal}, 6\penalty0
  (4):\penalty0 679--684, 1957{\natexlab{b}}.
\newblock \doi{10.1512/iumj.1957.6.56038}.

\bibitem[Fisher(1922)]{Fisher1922}
Ronald~Aylmer Fisher.
\newblock On the mathematical foundations of theoretical statistics.
\newblock \emph{Philosophical Transactions of the Royal Society of London A:
  Mathematical, Physical and Engineering Sciences}, 222:\penalty0 309--368,
  January 1922.
\newblock \doi{10.1098/rsta.1922.0009}.

\bibitem[Hammersley and Handscomb(1964)]{Hammersley1964}
J.~M. Hammersley and D.~C. Handscomb.
\newblock \emph{Monte Carlo Methods}.
\newblock Monographs on Applied Probability and Statistics. John Wiley and
  Sons, New York, 1964.
\newblock \doi{10.1007/978-94-009-5819-7}.

\bibitem[Hern\'{a}ndez-Lerma(1989)]{Hernandez-Lerma1989Book}
On\'{e}simo Hern\'{a}ndez-Lerma.
\newblock \emph{Adaptive Markov Control Processes}, volume~79 of \emph{Applied
  Mathematical Sciences}.
\newblock Springer-Verlag, New York, 1989.
\newblock \doi{10.1007/978-1-4419-8714-3}.

\bibitem[Howard(1963)]{Howard1963}
Ronald~A. Howard.
\newblock Semi-{M}arkovian decision processes.
\newblock \emph{Bulletin of the International Statistical Institute},
  40\penalty0 (2):\penalty0 625--652, 1963.
\newblock From the Proceedings of the $34^\text{th}$ Session of the
  International Statistical Institute, Ottawa, Canada, August 21-29, 1963.

\bibitem[Madani et~al.(1999)Madani, Hanks, and Condon]{Madani1999}
Omid Madani, Steve Hanks, and Anne Condon.
\newblock On the undecidability of probabilistic planning and infinite-horizon
  partially observable {M}arkov decision problems.
\newblock In \emph{Proceedings of the Sixteenth National Conference on
  Artificial Intelligence}, pages 541--548, 1999.

\bibitem[Maniadakis and Trahanias(2011)]{ManiadakisTrahanias2011}
Michail Maniadakis and Panos Trahanias.
\newblock Temporal cognition: A key ingredient of intelligent systems.
\newblock \emph{Frontiers in Neurorobotics}, 5:\penalty0 2, 2011.
\newblock \doi{10.3389/fnbot.2011.00002}.

\bibitem[Ong et~al.(2010)Ong, Png, Hsu, and Lee]{Ong2010}
Sylvie C.~W. Ong, Shao~Wei Png, David Hsu, and Wee~Sun Lee.
\newblock Planning under uncertainty for robotic tasks with mixed
  observability.
\newblock \emph{International Journal of Robotics Research}, 29\penalty0
  (8):\penalty0 1053--1068, 2010.
\newblock \doi{10.1177/0278364910369861}.

\bibitem[Papadimitriou and Tsitsiklis(1987)]{Papadimitriou1987}
Christos~H. Papadimitriou and John~N. Tsitsiklis.
\newblock The complexity of {M}arkov decision processes.
\newblock \emph{Mathematics of Operations Research}, 12\penalty0 (3):\penalty0
  441--450, August 1987.
\newblock \doi{10.1287/moor.12.3.441}.

\bibitem[Pineau et~al.(2003)Pineau, Gordon, and Thrun]{Pineau2003}
Joelle Pineau, Geoffrey Gordon, and Sebastian Thrun.
\newblock Point-based value iteration: An anytime algorithm for {POMDPs}.
\newblock In \emph{International Joint Conference on Artificial Intelligence
  (IJCAI)}, pages 1025--1030, August 2003.

\bibitem[Pineau et~al.(2006)Pineau, Gordon, and Thrun]{Pineau2006}
Joelle Pineau, Geoffrey Gordon, and Sebastian Thrun.
\newblock Anytime point-based approximations for large {POMDPs}.
\newblock \emph{Journal of Artificial Intelligence Research}, 27:\penalty0
  335--380, November 2006.
\newblock \doi{10.1613/jair.2078}.

\bibitem[Precup(2000)]{Precup2000a}
Doina Precup.
\newblock \emph{Temporal Abstraction in Reinforcement Learning}.
\newblock PhD thesis, University of Massachusetts Amherst, 2000.

\bibitem[Rivest and Kohar(2020)]{RivestKohar2020}
Fran\c{c}ois Rivest and Richard Kohar.
\newblock A new timing error cost function for binary time series prediction.
\newblock \emph{{IEEE} Transactions on Neural Networks and Learning Systems},
  31\penalty0 (1):\penalty0 174--185, jan 2020.
\newblock \doi{10.1109/tnnls.2019.2900046}.

\bibitem[Ross(1970)]{Ross1970}
Sheldon~M. Ross.
\newblock \emph{Applied Probability Models with Optimization Applications}.
\newblock Holden-Day, 1970.

\bibitem[Russell and Norvig(2010)]{RussellNorvig2010}
Stuart Russell and Peter Norvig.
\newblock \emph{Artificial Intelligence: A Modern Approach}.
\newblock Pearson, New Jersey, 3rd edition, 2010.

\bibitem[Shani et~al.(2013)Shani, Pineau, and Kaplow]{Shani2013}
Guy Shani, Joelle Pineau, and Robert Kaplow.
\newblock A survey of point-based {POMDP} solvers.
\newblock \emph{Autonomous Agents and Multi-Agent Systems}, 27\penalty0
  (1):\penalty0 1--51, July 2013.
\newblock \doi{10.1007/s10458-012-9200-2}.

\bibitem[Sondik(1971)]{Sondik1971}
Edward~Jay Sondik.
\newblock \emph{The Optimal Control of Partially Observable Markov Processes}.
\newblock PhD thesis, Standford University, 1971.

\bibitem[Spaan(2012)]{Spaan2012}
Matthijs T.~J. Spaan.
\newblock Partially observable {M}arkov decision processes.
\newblock In Marco Wiering and Martijn van Otterlo, editors,
  \emph{Reinforcement Learning: State-of-the-Art}, volume~12 of
  \emph{Adaptation, Learning, and Optimization}, chapter~12, pages 387--414.
  Springer Berlin Heidelberg, 2012.
\newblock \doi{10.1007/978-3-642-27645-3_12}.

\bibitem[Spaan and Vlassis(2005)]{Spaan2005a}
Matthijs T.~J. Spaan and Nikos Vlassis.
\newblock Perseus: Randomized point-based value iteration for {POMDP}s.
\newblock \emph{Journal of Artificial Intelligence Research}, 24:\penalty0
  195--220, 2005.
\newblock \doi{10.1613/jair.1659}.

\bibitem[Sutton and Barto(2018)]{Sutton2018}
Richard~S. Sutton and Andrew~G. Barto.
\newblock \emph{Reinforcement Learning: An Introduction}.
\newblock Adaptive Computation and Machine Learning. MIT Press, 2nd edition,
  2018.

\bibitem[Sutton et~al.(1999)Sutton, Precup, and Singh]{Sutton1999b}
Richard~S. Sutton, Doina Precup, and Satinder Singh.
\newblock Between {MDP}s and semi-{MDP}s: A framework for temporal abstraction
  in reinforcement learning.
\newblock \emph{Artificial Intelligence}, 112\penalty0 (1-2):\penalty0
  181--211, August 1999.
\newblock \doi{10.1016/S0004-3702(99)00052-1}.

\bibitem[Tweedie(1957)]{Tweedie1957}
M.~C.~K. Tweedie.
\newblock Statistical properties of inverse {G}aussian distributions. {I}.
\newblock \emph{The Annals of Mathematical Statistics}, 28:\penalty0 362--377,
  1957.
\newblock \doi{10.1214/aoms/1177706964}.

\bibitem[Wakuta(1981)]{Wakuta1981}
Kazuyoshi Wakuta.
\newblock Semi-{M}arkov decision processes with incomplete state observation:
  average cost criterion.
\newblock \emph{Journal of the Operations Research Society of Japan},
  24\penalty0 (2):\penalty0 95--108, June 1981.

\bibitem[Wakuta(1982)]{Wakuta1982}
Kazuyoshi Wakuta.
\newblock Semi-{M}arkov decision processes with incomplete state observation:
  discounted cost criterion.
\newblock \emph{Journal of the Operations Research Society of Japan},
  25\penalty0 (4):\penalty0 351--361, December 1982.

\bibitem[White(1976)]{White1976}
Chelsea~C. White.
\newblock Procedures for the solution of a finite-horizon, partially observed,
  semi-{M}arkov optimization problem.
\newblock \emph{Operations Research}, 24\penalty0 (2):\penalty0 348--358, April
  1976.
\newblock \doi{10.1287/opre.24.2.348}.

\bibitem[Yu(2006)]{Yu2006}
Huizhen Yu.
\newblock \emph{Approximate Solutions for Partially Observable Markov and
  Semi-Markov Decision Processes}.
\newblock PhD thesis, Massachusetts Institute of Technology, 2006.

\bibitem[Zhang and Revie(2017)]{Zhang2017}
Mimi Zhang and Matthew Revie.
\newblock Continuous-observation partially observable semi-{M}arkov decision
  processes for machine maintenance.
\newblock \emph{{IEEE} Transactions on Reliability}, 66\penalty0 (1):\penalty0
  202--218, March 2017.
\newblock \doi{10.1109/tr.2016.2626477}.

\end{thebibliography}

\end{document}